\PassOptionsToPackage{table}{xcolor}
\documentclass{article} 
\usepackage{iclr2027_conference,times}

\usepackage{amsmath,amsfonts,bm}

\def\eqref#1{equation~\ref{#1}}

\def\1{\bm{1}}

\DeclareMathAlphabet{\mathsfit}{\encodingdefault}{\sfdefault}{m}{sl}
\SetMathAlphabet{\mathsfit}{bold}{\encodingdefault}{\sfdefault}{bx}{n}

\usepackage{times}
\usepackage{latexsym}

\usepackage[T1]{fontenc}

\usepackage{hyperref}
\usepackage{url}

\usepackage{microtype}

\usepackage{inconsolata}

\usepackage{graphicx}

\usepackage{amsmath}
\usepackage{amssymb}
\usepackage{amsthm}
\usepackage{algorithm}
\usepackage{algpseudocode}

\usepackage{booktabs}
\usepackage{todonotes}
\usepackage{adjustbox}
\usepackage{tabularx}
\usepackage{booktabs}

\usepackage{sectsty}

\usepackage{multirow}   
\usepackage{xcolor} 

\usepackage[most]{tcolorbox}

\usepackage{subcaption} 

\usepackage{wrapfig}

\usepackage{enumitem}

\tcbset{
    promptbox/.style={
        enhanced,
        colback=gray!5,
        colframe=black!70,
        boxrule=0.5pt,
        arc=2pt,
        left=6pt,
        right=6pt,
        top=6pt,
        bottom=6pt,
        breakable
    }
}

\newtheorem{definition}{Definition}
\newtheorem{assumption}{Assumption}
\newtheorem{lemma}{Lemma}
\newtheorem{theorem}{Theorem}
\newtheorem{proposition}{Proposition}
\newtheorem{corollary}{Corollary}
\newtheorem*{remark}{Remark}
\newcommand{\methodname}{FOCUS}

\title{\methodname{}: Training-Free Decision-Preserving Context Compression for LLM Agents}

\author{
Shantanu Dixit\quad
Anson Bastos\quad
Xuchao Zhang\quad
Chetan Bansal\quad
Saravan Rajmohan\\[4pt]
\multicolumn{1}{c}{\normalfont M365 Research, Microsoft}
}
\iclrfinalcopy 
\begin{document}

\maketitle
\lhead{Preprint. Under review.}

\begin{abstract}
LLM agents accumulate interaction histories that grow linearly with task length, causing quadratic inference cost scaling and performance degradation from attention dilution. 
Existing context-compression methods learn what to discard offline: by contrastively optimizing guidelines, distilling compressors, or training compression policies. This incurs a substantial cost. Further, the compression policy is learned a priori and is not dynamically conditioned on the evolving test-time trajectories. In this paper we ask a complementary question: \textit{Which past interactions causally shape the agent's future decisions?} We recast context compression as a causal decision preservation problem over discrete interaction units and introduce \textbf{FOCUS}, a training-free context compression framework that operates entirely at test time. Our method requires no offline data collection or fine-tuning, and is architecture-agnostic, attaching to any closed-API frontier model as a modular compression layer. We evaluate FOCUS on diverse agentic benchmarks including API and tool-calling, QA, web domain and multi-turn dialogue. Our method establishes new state of the art performance, cutting peak context by up to 48\% and dependency by 73\% while improving task success by up to 8.9 percentage points over uncompressed execution.
\end{abstract}

\section{Introduction}
Recent advances in large language models (LLMs) have enabled long-horizon agents that solve
complex tasks through iterative reasoning and interaction with external environments
\citep{yao2022react, trivedi2024appworld, wang2024officebench}.
As an agent acts, its context accumulates an ever-growing history of reasoning, actions, and
observations. We refer to a single (reasoning, action, observation) tuple as a \emph{span}.
Because this history grows linearly with the task horizon, and self-attention is quadratic in
sequence length, inference cost scales quadratically. Moreover, decision quality degrades as relevant evidence is diluted among accumulated context \citep{liu2024lost}. Long-horizon agents therefore face a fundamental bottleneck of retaining the historical context needed for good decisions under a strict context budget.

Existing approaches to context compression largely treat the problem as one of \emph{redundancy
reduction}. Token-level methods \citep{jiang2023llmlingua,
jiang2024longllmlingua} drop tokens according to a language-model likelihood, and soft-prompt
methods compress the context into summary vectors \citep{chevalier2023adapting}. More recent
agent-specific methods \citep{kang2025acon,yuksel2025paace} learn
compression strategies offline. A complementary axis reduces cost at the attention level through
KV-cache eviction \citep{zhang2023h2o, li2024snapkv}. However, these approaches face two
limitations: First, token and cache level filtering can corrupt
structured interaction traces (tool calls, code, or execution logs), by discarding syntactically
predictable but semantically critical elements. Second, methods that rely on offline supervision
incur substantial optimization or distillation cost. As their compression policy is
fixed before inference, 
they provide no mechanism to account for task-specific dependencies that emerge at inference time.
Reinforcement-learning approaches \citep{sun2025scaling} improve adaptivity but add optimization complexity and require
open-weight access for training.

In contrast, we argue that context compression for agents is fundamentally not a redundancy reduction problem, but a \emph{causal decision-preservation problem}. The central question we ask is: \textbf{\emph{which past spans are causally necessary to preserve the agent’s future decision trajectory?}} The relevance of a past interaction is therefore inherently forward-looking: a span should be retained if and only if it influences the agent’s future decision trajectory. This perspective suggests that the goal of compression is not to faithfully summarize the past, but to preserve a sufficient statistic of the history required for future decision-making.

From an information-theoretic perspective, the objective, of compressing the history while preserving information relevant to future decisions, corresponds to an Information Bottleneck (IB) objective. Directly optimizing the Information Bottleneck objective is intractable because it requires both (i) searching over an exponentially large space of possible span subsets, and (ii) evaluating mutual information terms involving the distribution of future trajectories in closed form.
Thus, instead of directly optimizing for selecting span subsets, we focus on the utility of each span for the task. For this, we seek to estimate the extent to which removing the span changes the agent's predicted future plans (\emph{counterfactual utility}). We formally show connections between the IB objective and the \emph{counterfactual utility} of each span. 
While this formulation is computationally tractable (though still expensive), it has the following caveat: the probability distribution over actions could be unavailable in an online setting.
We avoid the need for computing the exact probability distributions over the action space and compute span-wise utility. Specifically we design an approximation based on \emph{Monte Carlo Rollouts} of the future trajectory and propose \textbf{\methodname{}: Forward-looking Causal Utility Span Estimation}, a training-free framework for context compression that operates entirely at test time.

\methodname{} offers several advantages. First, it eliminates the need for offline data collection, synthetic trajectory generation, or task-specific fine-tuning, operating entirely through test-time reasoning. Second, it is architecture-agnostic and can be attached as a modular component to both open-weight and closed-API frontier models. Third, by operating over interaction spans rather than tokens, \methodname{} preserves the structural integrity of agent trajectories, ensuring that local causal relationships are not disrupted. Fourth, it also eliminates the need to compute exact probability distributions over unknown future trajectories while keeping the computation tractable.

Through extensive experiments on benchmarks: OfficeBench, AppWorld, 8-QA, WebVoyager and $\tau^2$-Bench, we demonstrate that \methodname{} significantly reduces context footprint while improving task success rates relative to existing compression methods. Further we empirically 
show that the additional planning overhead is offset by reduced input-token consumption, yielding lower total token (up to 31\%) usage while improving task success (Table~\ref{tab:latency_appendix}).
Our results highlight the importance of forward-looking, decision-aware compression and establish \methodname{} as a simple, effective and practical solution for scalable agent deployment.

\vspace{0.5em}
\noindent\textbf{Contributions.} We summarize our contributions as follows:
\begin{itemize}[leftmargin=*]
    
    \item We formulate context compression for agents as a causal decision-preservation problem, and connect it to an Information Bottleneck objective that minimizes retained history while preserving predictive information about future actions.
    
    \item We propose \methodname{}, a training-free, test-time compression framework that approximates the information bottleneck optimization using Monte Carlo rollouts of future trajectories. 
    
    \item We introduce a dual-objective strategy combining stochastic plan-based dependency estimation with a verification step that preserves critical, overlooked spans for safe and efficient compression.

    \item We empirically demonstrate that \methodname{} achieves state-of-the-art performance on agent benchmarks, improving task success rates while significantly reducing context usage.
\end{itemize}
\section{Related Work}

\paragraph{Long-horizon LLM agents.} LLMs are increasingly deployed as agents that perform iterative decision-making through repeated interaction with tools and environments \citep{yao2022react, qin2024toolllm}, coordinating actions across tens to
hundreds of steps while maintaining consistency with past observations and reasoning \citep{wang2024officebench, trivedi2024appworld}. As trajectories grow, the accumulated history becomes a major bottleneck, motivating context-management mechanisms that retain only information needed for future decisions.

\paragraph{Context compression.} A large body of work reduces effective context length. Token-level methods prune inputs by language-model likelihood or learned token importance \citep{jiang2023llmlingua, jiang2024longllmlingua, pan2024llmlingua}, and soft-prompt methods compress context into summary vectors \citep{chevalier2023adapting}; orthogonally, KV-cache eviction reduces cost at the attention level \citep{zhang2023h2o, li2024snapkv}. These
techniques suit static or loosely structured inputs, but are ill-suited to agent trajectories, whose structured tool calls, code, and execution logs carry causal dependencies that token or cache-level filtering can silently corrupt.

\paragraph{Context compression for LLM agents.} Agent-specific methods instead learn \emph{what to compress} offline. Early approaches use environment-specific strategies tailored to particular settings \citep{deng2023mind2web, yang2024swe, lee2025learning}, limiting
generality. Recent methods learn general policies: ACON \citep{kang2025acon} contrastively optimizes a natural-language compression guideline from full-versus-compressed trajectories and
distills it into smaller models, while PAACE \citep{yuksel2025paace} distills a plan-aware compressor from a synthetic workflow corpus (Table~\ref{tab:focus_comparison}). A second line learns context management jointly with the agent via reinforcement learning \citep{zhou2025mem1, sun2025scaling}. These incur substantial offline cost and, with a policy fixed before inference, cannot adapt to task-specific dependencies at test time; RL variants further require open-weight access. A complementary direction adds external memory \citep{packer2023memgpt, chhikara2025mem0},
reintroducing indexing and retrieval and targeting conversational recall rather than tool-using agentic control. 

Complementary to these, we cast compression as a \emph{decision-preserving causal abstraction} problem: \textbf{FOCUS} is a \emph{training-free, test-time} framework that scores each span by its \emph{forward-looking counterfactual utility} rather than by past redundancy or an offline policy. Operating at the span level, it is
orthogonal to token, cache, and memory-based methods, which can be applied within retained spans. 
\section{Problem Formulation}
\label{sec:problem_formulation}
In this section we formalize our approach for context compression in language model agents. Given a task goal \(g\), an agent interacts with an external environment over multiple steps by producing actions and receiving observations. The interaction history up to step \(t\) is denoted as $H_t = (s_1, s_2, \ldots, s_t),$
where each \(s_i\) is an atomic interaction span. In this work, we define a span as a complete reasoning-execution-feedback unit: $s_i = (r_i, a_i, o_i),$
where \(r_i\) denotes the agent's intermediate reasoning or thought, \(a_i\) denotes the executed action or tool call, and \(o_i\) denotes the resulting environment observation. We use span-level units rather than token-level units because agent trajectories often contain structured actions, tool calls, error messages, and environment states whose local causal structure should be preserved during compression.

An agent typically conditions its next action on the full interaction history and the task goal: $a_{t+1} \sim \pi(\cdot \mid H_t, g)$
where \(\pi\) denotes the main agent policy. As the horizon grows, however, \(H_t\) can become increasingly long, leading to higher inference cost, increased latency, and degraded decision quality. The goal of context compression is therefore to construct a compressed trace $Z_t = C(H_t),$
where \(C\) is a compression function and \(Z_t\) is the subset of spans substantially shorter than \(H_t\), while preserving the information required for future decision-making.

Unlike generic text summarization, agent context compression is not required to faithfully reconstruct every past event. Instead, the compressed trace should preserve the causal states and constraints that affect the agent's future behaviour. Let \(\tau_{t:T}\) denote the future trajectory from step \(t\) to the terminal step \(T\): $\tau_{t:T} = (a_{t+1}, o_{t+1}, \ldots, a_T, o_T).$
An ideal compressed trace should act as an approximate sufficient statistic of the full history for predicting future behaviour under the task goal: $P(\tau_{t:T} \mid Z_t, g) \approx P(\tau_{t:T} \mid H_t, g)$
while satisfying a smaller context budget, i.e., $|Z_t| \ll |H_t|.$

We therefore formulate context compression as a
decision-preserving compression problem. Let
$
P_H := P(\tau_{t:T}\mid H_t,g)
$
denote the future trajectory distribution conditioned on the full
interaction history $H_t$ and goal $g$, and let
$
P_Z := P(\tau_{t:T}\mid Z_t,g)
$
denote the Bayes optimal distribution conditioned on the compressed
context $Z_t = C(H_t)$.
The compressor seeks to minimize context cost while preserving the
decision-relevant information contained in the original history:
\begin{equation}
\begin{aligned}
\min_C \quad
& \mathrm{Cost}(Z_t)
+ \lambda \,
\mathbb{E}_{H_t,g}
\!\left[
D_{\mathrm{KL}}(P_H \,\|\, P_Z)
\right]
\\
\end{aligned}
\label{eq:decision_preserving}
\end{equation}
The first term measures context cost, while the second term penalises loss of decision-relevant information by comparing the future trajectory distribution induced by the compressed trace against that induced by the full history.

This objective admits a natural Information Bottleneck (IB) interpretation. Specifically, the compression term \(\mathrm{Cost}(Z_t)\) can be viewed as a proxy for the information retained from the original history, i.e.\ \(I(H_t; Z_t \mid g)\). As we keep adding tokens in $Z_t$ the mutual information between $H_t, Z_t$ increases and so does $\mathrm{Cost}(Z_t)$.
The expected KL term is equivalent, up to an additive constant independent of \(C\), to the conditional entropy of the future trajectory given the compressed trace. Let $\mathcal{H}_Z := \mathcal{H}_e(\tau_{t:T}\mid Z_t,g)$ and $\mathcal{H}_H := \mathcal{H}_e(\tau_{t:T}\mid H_t,g)$, where $\mathcal{H}_e$ is the entropy, then:
$\mathbb{E}_{H_t,g}\!\bigl[D_{\mathrm{KL}}(P_H \,\|\, P_Z)\bigr] = \mathcal{H}_Z - \mathcal{H}_H.$
Since \(\mathcal{H}_e(\tau_{t:T}\mid H_t,g)\) does not depend on the compressor \(C\), minimizing the expected trajectory divergence is equivalent to minimizing \(\mathcal{H}_e(\tau_{t:T}\mid Z_t,g)\), or equivalently, maximising the predictive information
$
I(Z_t;\tau_{t:T}\mid g)
=
\mathcal{H}_e(\tau_{t:T}\mid g)-\mathcal{H}_e(\tau_{t:T}\mid Z_t,g).
$
Thus, our objective can be recast in the IB form
\[
\min_C\; I(H_t; Z_t \mid g)\;-\;\beta\, I(Z_t;\tau_{t:T}\mid g),
\]
for some trade-off coefficient \(\beta > 0\). In this view, context compression seeks a representation \(Z_t\) that is maximally compact while preserving the information in the history that is most relevant for predicting the agent's future trajectory.
The central challenges are: (i) obtaining mutual information requires future trajectory distributions which are not directly observable (ii) optimizing the objective comparing distributions is computationally expensive (c.f. Fig. \ref{fig:draft_convergence}). In the next section we propose our method that enables a principled approximation to optimizing the above objective.

\section{Method: FOCUS}

We propose a test-time context compression method, named FOCUS, for language model agents. FOCUS compresses an agent's interaction history according to its estimated utility for future decision-making, rather than according to token-level redundancy or generic summarization quality. When the accumulated history exceeds a context budget, FOCUS identifies which historical spans are likely to affect the agent's future behaviour, preserves those spans
and removes spans that are unlikely to influence downstream decisions.

At a high level, FOCUS follows a four-stage pipeline. First, it represents the agent history as a sequence of atomic interaction spans. Second, it estimates the future utility of each span through draft-model plan rollouts. Third, it retains spans that are repeatedly cited as future dependencies. Finally, it performs defensive verification to rescue spans whose deletion may cause repeated mistakes or loss of causal state. Algorithm~\ref{alg:psa} and Figure \ref{fig:psa_method_overview} summarizes the overall procedure.

\begin{algorithm}[t]
\caption{FOCUS}
\label{alg:psa}
\begin{algorithmic}[1]
\Require History $H_t=(s_1,\ldots,s_t)$, goal $g$, budget $\delta_{\text{mem}}$, draft model $q_\phi$, rollouts $N$, threshold $\tau$
\Ensure Compressed trace $Z_t$

\If{$|H_t| \leq \delta_{\text{mem}}$}
    \State \Return $H_t$
\EndIf

\State Initialize $\hat{u}_i \leftarrow 0$ for all $s_i \in H_t$

\For{$k=1$ to $N$}
    \State Sample plan sketch $p_k \sim q_\phi(p\mid H_t,g)$
    \State Extract dependency set $\mathrm{Dep}(p_k)$
    \State Update $\hat{u}_i \leftarrow \hat{u}_i + 1/N$ for each $s_i\in \mathrm{Dep}(p_k)$
\EndFor

\State $S_U \leftarrow \{s_i\in H_t:\hat{u}_i\geq\tau\}$

\State $S_R \leftarrow \mathrm{DefensiveVerify}(H_t\setminus S_U, q_\phi, \tau)$

\State $S_{\mathrm{keep}} \leftarrow S_U \cup S_R$

\State $S_{\mathrm{drop}} \leftarrow H_t \setminus S_{keep}$

\State $Z_t \leftarrow \mathrm{BuildTrace}(S_{\mathrm{keep}},S_{\mathrm{drop}})$

\State \Return $Z_t$
\end{algorithmic}
\end{algorithm}

\subsection{Span-Level State Representation}

As defined in Section~\ref{sec:problem_formulation}, an interaction history $H_t$ is constructed from atomic interaction spans $s_i = (r_i, a_i, o_i)$. By operating strictly at this span level rather than the token level, FOCUS prevents the fragmentation of structured actions, tool calls, error messages, and observations. This span-level integrity ensures the local causal relation between the agent's intent, execution, and environmental feedback remains uncorrupted, allowing the state transitions to be cleanly evaluated for future utility.\\
The compressor is triggered when the history exceeds a predefined context budget $\delta_{\text{mem}}$, i.e., $|H_t|>\delta_{\text{mem}}$. It then constructs a compressed trace $Z_t=C(H_t)$, which is used by the main agent for subsequent decision-making: $a_{t+1} \sim \pi(\cdot \mid Z_t,g).$

\begin{figure*}
    \centering
    \includegraphics[width=0.95\linewidth]{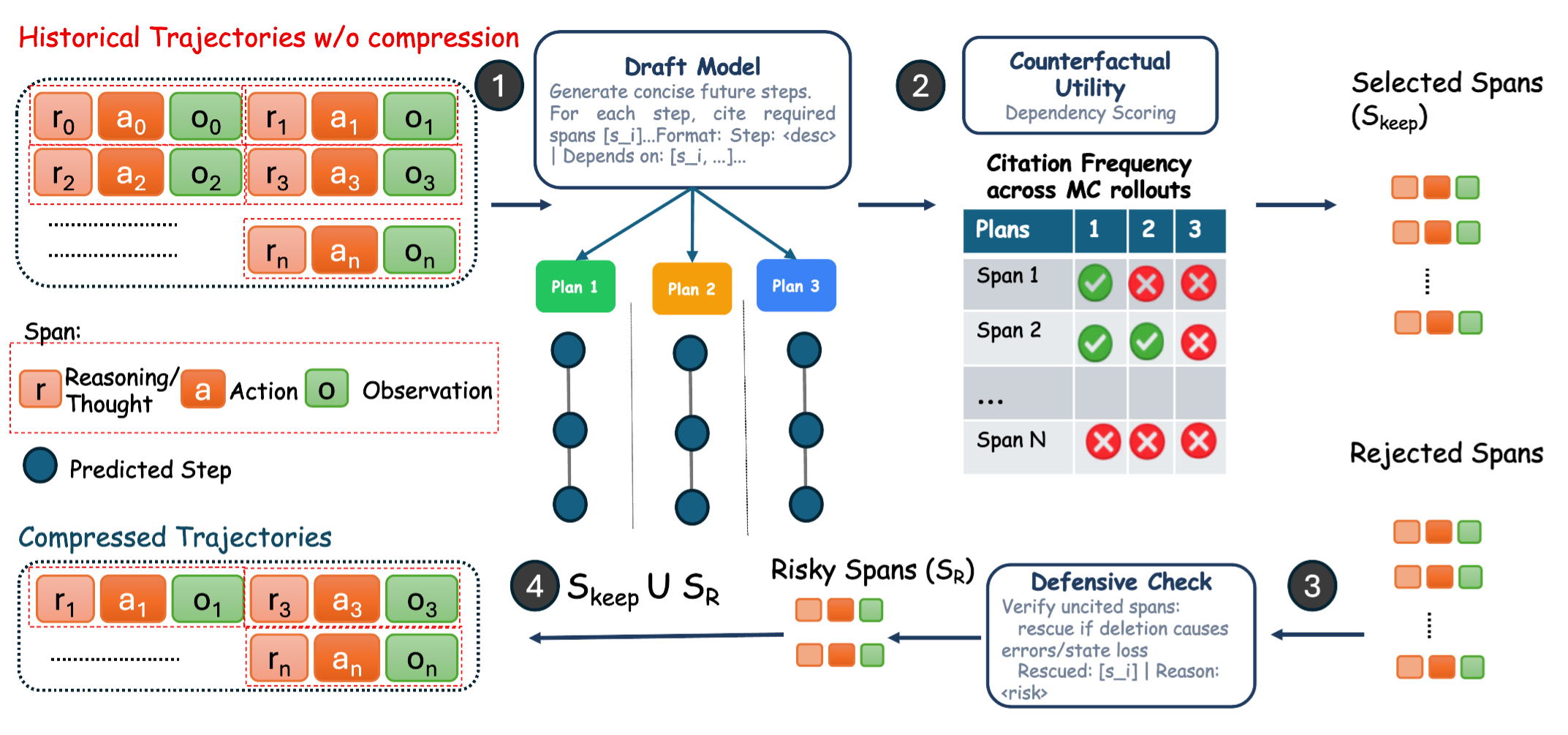}
    \caption{
Overview of FOCUS for decision-preserving context compression. Given a growing interaction history composed of thought–action–observation spans, FOCUS invokes a lightweight draft model to generate multiple stochastic plan sketches, each explicitly identifying the historical spans required for future steps. These dependencies are aggregated to estimate counterfactual future utility, enabling the selection of spans that significantly influence downstream decisions. To ensure robustness in stateful environments, a defensive verification stage identifies and retains uncited spans whose removal may lead to repeated errors or loss of critical state. 
}
    \label{fig:psa_method_overview}
\end{figure*}

\subsection{Counterfactual Future Utility}

The core question in FOCUS is whether a historical span is necessary for future decisions. 
By definition this intends to measure whether removing the span would change the future behaviour of the agent. In this section, we also show connections of this measure with the IB objective in \S \ref{sec:problem_formulation}.

For a span $s_i$, let $H_t^{-i}=H_t\setminus\{s_i\}$ denote the counterfactual history obtained by removing $s_i$.

\begin{definition}[Counterfactual Future Utility]
For a future variable $Y$, such as the remaining trajectory or a high-level future plan, the counterfactual future utility of span $s_i$ is defined as
\begin{equation}
U(s_i)
=
D\!\left(
P(Y\mid H_t,g)
\;\|\;
P(Y\mid H_t^{-i},g)
\right),
\label{eq:counterfactual-utility}
\end{equation}
where $D(\cdot\|\cdot)$ is a divergence measure (e.g. KL divergence) between future distributions.
\end{definition}

A span with high $U(s_i)$ is future-relevant because removing it changes the predicted future behaviour, whereas a span with low $U(s_i)$ can potentially be abstracted or removed.

$U(s_i)=0$ when removing $s_i$ leaves the future decision
distribution unchanged. 
contrapositively, $U(s_i) > 0$ implies removing $s_i$ changes the agent's future decisions. $U$ therefore formalizes the core intuition of FOCUS: a span is worth preserving insofar as it affects the agent's future decisions.
We next show that this notion
of counterfactual utility, under mild assumptions
(Assumption~\ref{assumption:future-coverage-invariance}), corresponds to a
principled information-theoretic objective: \emph{selecting spans by counterfactual
utility under a context budget optimizes the Information Bottleneck (IB)
objective up to a bounded interaction term.}

\begin{proposition}[Counterfactual utility and the Information Bottleneck objective]
Let $Z_t\subseteq H_t$ be a compressed trace satisfying the entropy budget
$I(H_t;Z_t\mid g)=\mathcal{H}(Z_t\mid g) - \mathcal{H}(Z_t\mid H_t,g) \leq \mathcal{H}(Z_t\mid g)\le B$, and let $Y$ denote the random set of
future events. Then $\mathbb{E}[U(s_i)]=I(s_i;Y\mid H_t^{-i},g)$, and under
Assumption~\ref{assumption:future-coverage-invariance} with tolerance
$\bar\varepsilon \geq 0$, we have $\Bigl|\,I(Z_t;Y\mid g)-\textstyle\sum_{s_i\in Z_t}\mathbb{E}[U(s_i)]\Bigr|\le\bar\varepsilon .$
Hence maximising total counterfactual utility subject to the budget maximises
$I(Z_t;Y\mid g)$ up to $\bar\varepsilon$, and its Lagrangian is the Information Bottleneck objective
\[
\min_C \; I(H_t;Z_t\mid g) - \beta\, I(Z_t;Y\mid g),
\]
with $\beta$ the inverse multiplier of the budget. 
The error
$\bar\varepsilon$ vanishes for long horizons (Appendix~\ref{ib_proof}).
\end{proposition}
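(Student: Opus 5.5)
The plan is to prove the three claims in turn. Each reduces to a standard information-theoretic identity, plus one use of Assumption~\ref{assumption:future-coverage-invariance}.

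\textbf{First claim: $\mathbb{E}[U(s_i)]=I(s_i;Y\mid H_t^{-i},g)$.} I take $D$ to be the KL divergence in Definition~1. Since $H_t=(H_t^{-i},s_i)$, the conditional $P(Y\mid H_t,g)$ equals $P(Y\mid s_i,H_t^{-i},g)$. Taking the expectation of $D_{\mathrm{KL}}\bigl(P(Y\mid s_i,H_t^{-i},g)\,\|\,P(Y\mid H_t^{-i},g)\bigr)$ over the joint law of $(s_i,H_t^{-i},g)$ gives the conditional mutual information $I(s_i;Y\mid H_t^{-i},g)$ by its definition. This is the same computation used in \S\ref{sec:problem_formulation} to turn the expected KL into an entropy difference: $\mathcal{H}(Y\mid H_t^{-i},g)-\mathcal{H}(Y\mid H_t,g)$.

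\textbf{Second claim: the bound on $I(Z_t;Y\mid g)$.} I order the retained spans $Z_t=(s_{i_1},\dots,s_{i_m})$ and apply the chain rule:
\[
I(Z_t;Y\mid g)=\sum_{j=1}^m I\bigl(s_{i_j};Y\mid s_{i_1},\dots,s_{i_{j-1}},g\bigr).
\]
The $j$-th summand conditions on the earlier retained spans. The corresponding $\mathbb{E}[U(s_{i_j})]$ instead conditions on all of $H_t^{-i_j}$. The gap between the two is an interaction-information term,
\[
\Delta_j=I(s_{i_j};Y\mid Z_{<j},g)-I(s_{i_j};Y\mid H_t^{-i_j},g).
\]
I read Assumption~\ref{assumption:future-coverage-invariance} as exactly the statement that conditioning on the extra spans $H_t^{-i_j}\setminus Z_{<j}$ changes the predictive information of $s_{i_j}$ about $Y$ by at most $\varepsilon_j$, with $\sum_j\varepsilon_j\le\bar\varepsilon$. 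The triangle inequality then yields
\[
\Bigl|I(Z_t;Y\mid g)-\textstyle\sum_j\mathbb{E}[U(s_{i_j})]\Bigr|\le\sum_j|\Delta_j|\le\bar\varepsilon.
\]

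\textbf{Main obstacle.} The chain rule itself is routine. The difficulty is matching the precise form of the assumption to this telescoping decomposition. If the assumption is stated per span, I would need either a uniform bound times $m$ or summable tolerances. If it is stated instead as an approximate conditional-independence (coverage) condition, I would first convert it into the $\Delta_j$ bound using $I(A;B\mid C)-I(A;B\mid C,D)=I(A;D\mid C)-I(A;D\mid B,C)$, and bound each piece separately. The remark that $\bar\varepsilon\to0$ for long horizons should follow from the assumption's tolerance vanishing as $T-t$ grows. I would simply cite the appendix for this rather than reprove it.

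\textbf{Optimization and Lagrangian.} The budget constraint gives $I(H_t;Z_t\mid g)\le\mathcal{H}(Z_t\mid g)\le B$. For every feasible $Z_t$, the objective $\sum\mathbb{E}[U]$ is within $\bar\varepsilon$ of $I(Z_t;Y\mid g)$. Hence a maximizer of the surrogate is $2\bar\varepsilon$-optimal for the true objective, or $\bar\varepsilon$ under the paper's looser phrasing. Finally, I write the constrained problem $\max I(Z_t;Y\mid g)$ subject to $I(H_t;Z_t\mid g)\le B$ and form its Lagrangian with multiplier $\mu\ge0$. Dividing by $\mu$ and setting $\beta=1/\mu$ produces $\min_C I(H_t;Z_t\mid g)-\beta I(Z_t;Y\mid g)$, matching the IB form already derived in \S\ref{sec:problem_formulation}.
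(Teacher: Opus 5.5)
Your proposal is correct and follows the paper's proof. The expected KL is identified with $I(s_i;Y\mid H_t^{-i},g)$ directly from the definition. Your identity $I(A;B\mid C)-I(A;B\mid C,D)=I(A;D\mid C)-I(A;D\mid B,C)$, with $A=s_{i_j}$, $B=Y$, $C=(P_{i_j},g)$, $D=R_{i_j}$, is exactly the two-way chain-rule expansion of $I(s_{i_j};Y,R_{i_j}\mid P_{i_j},g)$ that the paper uses to turn the assumption (which bounds $|I(s_{i_j};R_{i_j}\mid Y,P_{i_j},g)-I(s_{i_j};R_{i_j}\mid P_{i_j},g)|$) into your bound on $|\Delta_j|$, after which the chain rule over $Z_t$ and the triangle inequality finish the argument.

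Two minor refinements. First, the paper gets the IB compression term from the entropy budget via $I(H_t;Z_t\mid g)=\mathcal{H}(Z_t\mid g)$, which holds because span selection is deterministic. This is what justifies Lagrangianizing $I(H_t;Z_t\mid g)\le B$ in place of $\mathcal{H}(Z_t\mid g)\le B$. Second, the paper's long-horizon claim concerns the relative error $\bar\varepsilon/I(Z_t;Y\mid g)=O\bigl(m(fh+\rho)/K\bigr)$: $\bar\varepsilon$ stays bounded while $I(Z_t;Y\mid g)$ grows with the number $K$ of future events. It is not that a per-span tolerance itself shrinks as $T-t$ grows.
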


However, exact computation of the KL-based utility requires evaluating future trajectory distributions under counterfactual histories, which are not directly observable at inference time.
FOCUS approximates this using an efficient count-based utility obtained from draft-model plan rollouts.

\subsection{Draft-Model Monte Carlo Estimation}

In this section we present our approximation to optimizing the counterfactual utility (or IB) objective seen in the previous sections.
Let $q_\phi$ denote a lightweight draft model used by FOCUS. Given the task goal $g$ and history $H_t$, FOCUS samples $N$ stochastic plan sketches, $p_k \sim q_\phi(p\mid H_t,g)$ for $k=1,\ldots,N$. Each plan sketch describes a possible high-level plan for completing the remaining task. For every planned step, the draft model is required to cite the historical spans it depends on and
outputs a short description together with a dependency set, e.g.,
\texttt{Step: ... | Depends on: [s\_i, s\_j]}.
We say the future depends on $s_i$ if some remaining step requires information contained in $s_i$ and the rollouts sample this event.
This converts future planning into an explicit dependency estimation problem. For each span $s_i$, we define a binary dependency indicator:
$
X_i^{(k)}
=
\mathbf{1}\!\left[s_i \in \mathrm{Dep}(p_k)\right],
$
, where $\mathrm{Dep}(p_k)$ denotes the set of spans cited by plan sketch $p_k$. FOCUS estimates the future dependency score of $s_i$ by its citation frequency across rollouts:
$
\hat{u}_i
=
\frac{1}{N}\sum_{k=1}^{N} X_i^{(k)}.
$

Intuitively, $\hat{u}_i$ measures how consistently the draft model identifies $s_i$ as necessary for future planning. Spans repeatedly cited across stochastic rollouts are more likely to contain information needed by the main agent.

In the below result we connect this estimation of span importance with the solution optimizing the counterfactual utility (information bottleneck). Proof is in Appendix \ref{monotone_ranking}.

\begin{proposition}[Dependency score bounds counterfactual utility]
\label{prop:main-certificate}
Under mild assumptions (Appendix~\ref{monotone_ranking}), for any set
of spans $S\subseteq H_t$,
\[
D_{\mathrm{KL}}\bigl(P(Y\mid H_t,g)\,\|\,P(Y\mid H_t\setminus S,g)\bigr)
\;\le\; m_{\max}\,\mathbb{P}\bigl( \exists s_i \in S: X_i = 1) \leq m_{\max}\sum_{s_i\in S} \hat{u_i}
\]

\end{proposition}

In the above, $m_{\max}$ is the largest distortion that removing a set of spans
induces on the futures that depend on it (Assumption~\ref{ass:impact-scaling}).
Thus sets of spans that plan rollouts rarely cite 
have provably small counterfactual utility, and discarding them
provably preserves the agent's future decision distribution up to
$m_{\max}$ times their citation probability.
In the appendix \ref{sec_utility_separation} we show that the MC estimate $\hat{u_i}$ is unbiased and informs the number of rollouts needed.

Given the estimated score $\hat{u}_i$, FOCUS first constructs an initial utility-retained set:
$
S_U
=
\{s_i \in H_t : \hat{u}_i \geq \tau\}
$,
where $\tau$ is a utility threshold. Spans in $S_U$ are preserved because they are estimated to have high future planning utility.
For brevity, the draft model prompts and rollout examples are provided in Appendix 
\ref{draft_model_prompt}.

\subsection{Defensive Verification}
\label{method: defensive_verification}
The previous section's utility-based planning may miss spans that encode negative constraints or stateful failures. For example, a failed tool call, invalid password attempt, or a rejected API request may not be explicitly cited in a future plan. However, removing such spans can cause the main agent to repeat invalid actions, enter invalid loops, or lose important state information.
To address this issue, FOCUS includes a defensive verification stage. After optimistic planning identifies the utility-retained set $S_U$, the draft model reviews spans not included in $S_U$ and identifies spans whose deletion may cause repeated mistakes, invalid loops, or loss of causal state. This produces a rescued set: $S_R$.
The final preserved set is: $S_{\mathrm{keep}}
=
S_U
\cup
S_R.$

Spans in $S_R$ are retained even if they were not frequently cited during optimistic planning. This defensive step prioritizes avoiding catastrophic forgetting over maximum compression.
The final compressed trace retains spans in $S_{\text{keep}}$ verbatim and discards the remainder: $Z_t = \{s_i \in H_t : s_i \in S_{\text{keep}}\}$
where $S_{\text{keep}} = S_U \cup S_R$ as defined above. Appendix \ref{sec_missed_spans} provides conditions when this stage helps.
We present further theoretical analysis and experiments validating the design choices in Appendix~\ref{theoretical_analysis}.

\section{Experiments}
\label{sec:experiments}

We address the following questions through our experiments:
1) \textbf{RQ1 (Efficiency vs. Performance):} Can \methodname{} reduce context costs (peak tokens and dependency) without degrading task success compared to uncompressed and existing compression baselines? (\S\ref{sec:main_results})
2) \textbf{RQ2 (Draft Model Sensitivity):} How sensitive is the compression efficacy and overall API cost to the size and capability of the draft model? (\S\ref{sec:draft_models})
3) \textbf{RQ3 (Memory Budget and Compression Frequency):} How does the context budget threshold $\delta_{\text{mem}}$ dictate the frequency of compressions, and what is its effect on the efficiency-accuracy trade-off? (\S\ref{sec:ablations})
4) \textbf{RQ4 (Cost and Latency):} What is the API cost and wall-clock latency overhead of the draft-based estimator? (\S\ref{sec:cost_latency})

\textbf{Experimental Setup:}
\label{sec:setup}
We evaluate \methodname{} on five agentic benchmarks spanning diverse agentic settings, including API-centric tool use, knowledge intensive question answering, web navigation and multi-turn dialogue: \textbf{AppWorld}~\cite{trivedi2024appworld}, \textbf{OfficeBench}~\cite{wang2024officebench}, \textbf{8-objective QA}~\cite{kwiatkowski2019natural}, \textbf{WebVoyager} \cite{he2024webvoyager} and \textbf{$\tau^2$-Bench} \cite{barres2025tau}. 
We characterize the trajectory statistics of the benchmarks in Table \ref{tab:long_horizon_stats}. 
We evaluate our method using \textbf{Accuracy} (task success), \textbf{Steps} (avg. interactions), \textbf{Peak Tokens} (max. context length across steps, $10^3$), \textbf{Dependency} (cumulative dependence of actions on prior context, $10^6$). 
\textbf{Implementation:} For fairness, we adopt the same experiment settings as ACON (\cite{kang2025acon}, ICML). We use \texttt{gpt-4.1} (main agent, temperature 0.0, seed 42), with tokenization using \texttt{tiktoken} (\texttt{cl100k\_base}) and optional open-weight draft models (Qwen3-8B/14B via HuggingFace). We detail the benchmarks and implementation details in ~\ref{app:benchmarks} and ~\ref{app:implementation_details} respectively.

\subsection{Results}
\label{sec:main_results}


\begin{table}[t]
\centering

\begin{minipage}[t]{0.32\linewidth}
\centering
\scriptsize
\setlength{\tabcolsep}{0.8pt}
\renewcommand{\arraystretch}{0.9}

\begin{adjustbox}{max width=\linewidth}
\begin{tabular}{lcccc}
\toprule
\textbf{Method} & \textbf{Acc$\uparrow$} & \textbf{Stp$\downarrow$} & \textbf{Pk$\downarrow$} & \textbf{Dep$\downarrow$} \\
\midrule
\multicolumn{5}{c}{\textbf{Agent:} \texttt{gpt-4.1} / \textbf{Comp:} \texttt{gpt-4.1}} \\
\midrule
No compression & 56.0 & 16.14 & 9.93 & 5.96 \\
\midrule
FIFO & 45.8 & 28.48 & \underline{6.73} & 5.69 \\
Retrieval & 27.4 & 33.17 & 8.39 & 6.68 \\
LLMLingua & 39.3 & 24.42 & 7.50 & 6.37 \\
Prompting & 43.5 & 24.01 & 6.93 & 5.29 \\
ACON UT & 51.2 & 20.92 & 7.17 & 4.49 \\
ACON UTCO & \underline{56.5} & 22.82 & 7.33 & 4.69 \\
\rowcolor{blue!10}
FOCUS-O & \underline{56.5} & \underline{18.90} & \textbf{6.50} & \textbf{2.38} \\
\rowcolor{blue!10}
FOCUS-D & \textbf{64.9} & \textbf{16.10} & 8.37 & \underline{4.15} \\
\bottomrule
\end{tabular}
\end{adjustbox}

\subcaption{AppWorld \texttt{test\_normal}.}
\label{tab:appworld}
\end{minipage}
\hfill
\begin{minipage}[t]{0.32\linewidth}
\centering
\scriptsize
\setlength{\tabcolsep}{0.8pt}
\renewcommand{\arraystretch}{0.9}

\begin{adjustbox}{max width=\linewidth}
\begin{tabular}{lcccc}
\toprule
\textbf{Method} & \textbf{Acc$\uparrow$} & \textbf{Stp$\downarrow$} & \textbf{Pk$\downarrow$} & \textbf{Dep$\downarrow$} \\
\midrule
\multicolumn{5}{c}{\textbf{Agent:} \texttt{gpt-4.1} / \textbf{Comp:} \texttt{gpt-4.1}} \\
\midrule
No compression & 76.84 & 11.52 & 7.27 & 4.43 \\
\midrule
FIFO & 67.37 & 12.26 & \underline{4.02} & 2.64 \\
Retrieval & 65.26 & 16.20 & 4.33 & 2.06 \\
LLMLingua & 70.53 & 10.89 & 4.65 & 1.85 \\
Prompting & 71.58 & 10.13 & 4.40 & \textbf{1.10} \\
ACON UT & 74.74 & 13.13 & 4.93 & 3.85 \\
ACON UTCO & 72.63 & 11.54 & 4.54 & 1.91 \\
\rowcolor{blue!10}
FOCUS-O & \textbf{78.90} & \textbf{9.30} & \textbf{3.81} & \underline{1.16} \\
\rowcolor{blue!10}
FOCUS-D & \underline{77.90} & \underline{9.60} & 4.20 & 1.36 \\
\bottomrule
\end{tabular}
\end{adjustbox}

\subcaption{OfficeBench \texttt{test}.}
\label{tab:officebench}
\end{minipage}
\hfill
\begin{minipage}[t]{0.32\linewidth}
\centering
\scriptsize
\setlength{\tabcolsep}{0.7pt}
\renewcommand{\arraystretch}{0.9}

\begin{adjustbox}{max width=\linewidth}
\begin{tabular}{lccccc}
\toprule
\textbf{Method} & \textbf{EM$\uparrow$} & \textbf{F1$\uparrow$} & \textbf{Stp$\downarrow$} & \textbf{Pk$\downarrow$} & \textbf{Dep$\downarrow$} \\
\midrule
\multicolumn{6}{c}{\textbf{Agent:} \texttt{gpt-4.1} / \textbf{Comp:} \texttt{gpt-4.1}} \\
\midrule
No compression & 0.366 & 0.488 & 15.78 & 10.35 & 3.32 \\
\midrule
FIFO & 0.293 & 0.388 & 19.26 & 5.09 & 2.51 \\
Retrieval & 0.331 & 0.438 & 20.06 & 5.11 & 2.62 \\
LLMLingua & 0.363 & 0.481 & 17.68 & 5.68 & 2.24 \\
Prompting & \underline{0.376} & 0.478 & 18.70 & 4.73 & 1.66 \\
ACON UT & 0.373 & 0.494 & 17.14 & \underline{4.71} & \underline{1.57} \\
ACON UTCO & 0.335 & 0.458 & 17.79 & \textbf{4.65} & \textbf{1.50} \\
\rowcolor{blue!10}
FOCUS-O & \textbf{0.386} & \textbf{0.500} & \textbf{16.10} & 6.70 & 2.30 \\
\rowcolor{blue!10}
FOCUS-D & \underline{0.376} & \underline{0.498} & \underline{17.00} & 9.17 & 4.05 \\
\bottomrule
\end{tabular}
\end{adjustbox}

\subcaption{8-objective QA \texttt{test}.}
\label{tab:8objqa}
\end{minipage}

\caption{Results on AppWorld, OfficeBench, and the 8-objective QA benchmark. \methodname{} improves task performance while reducing peak tokens and dependency.}
\label{tab:all_benchmarks}
\end{table}

\begin{table*}[t]
\centering
\footnotesize

\begin{minipage}[t]{0.485\textwidth}
\vspace{0pt}
\centering
\setlength{\tabcolsep}{3pt}
\renewcommand{\arraystretch}{0.9}

\begin{adjustbox}{max width=\linewidth}
\begin{tabular}{lcccc}
\toprule
\textbf{Method}
& \textbf{Acc. $\uparrow$}
& \textbf{Steps $\downarrow$}
& \textbf{Peak $\downarrow$}
& \textbf{Dep. $\downarrow$} \\
\midrule

\multicolumn{5}{c}{
\textbf{Agent:} \texttt{gpt-4.1-mini} /
\textbf{Compressor:} \texttt{gpt-4.1-mini}
} \\
\midrule

No compression & 35.7 & 18.14 & 8.55 & 5.07 \\
FIFO           & 39.3 & 30.39 & \textbf{6.18} & 5.24 \\
Retrieval      & 14.9 & 40.18 & 7.49 & 5.95 \\
LLMLingua      & 36.3 & 28.41 & 7.24 & 6.65 \\
Prompting      & 35.7 & 24.98 & 6.56 & 4.95 \\
ACON UT        & 42.3 & 22.46 & 6.51 & 5.48 \\
ACON UTCO      & 32.7 & 24.27 & 6.99 & 4.97 \\

\rowcolor{blue!10}
FOCUS-D & \textbf{44.0} & \textbf{17.90} & 6.71 & \textbf{3.81} \\

\bottomrule
\end{tabular}
\end{adjustbox}

\caption{
Results on AppWorld using a \texttt{gpt-4.1-mini} agent and compressor.
\methodname{} improves task success while reducing context usage dependency.
}
\label{tab:appworld_gpt41mini_avg}
\end{minipage}
\hfill
\begin{minipage}[t]{0.485\textwidth}
\vspace{0pt}
\centering
\setlength{\tabcolsep}{3pt}
\renewcommand{\arraystretch}{0.9}

\begin{adjustbox}{max width=\linewidth}
\begin{tabular}{lcccc}
\toprule

\textbf{Method}
& \textbf{Acc. $\uparrow$}
& \textbf{Steps $\downarrow$}
& \textbf{Peak $\downarrow$}
& \textbf{Dep. $\downarrow$} \\

\midrule

\multicolumn{5}{c}{
\textbf{Agent:} \texttt{gpt-4.1}
} \\

\midrule

Prompting (\texttt{gpt-4.1-mini})
& 39.3 & 23.61 & 7.03 & 5.19 \\

ACON (\texttt{gpt-4.1-mini})
& 47.6 & 21.46 & 7.25 & 5.24 \\

ACON (\texttt{Qwen3-14B})
& 50.0 & 21.72 & 6.83 & 4.80 \\

ACON (\texttt{Qwen3-8B})
& 47.0 & 21.58 & 6.98 & 4.76 \\

ACON (\texttt{Phi-4})
& 44.6 & 21.19 & 7.24 & 4.76 \\

\rowcolor{blue!10}
\methodname{}-D (\texttt{gpt-4.1-mini})
& 58.3 & \textbf{18.0} & \textbf{6.70} & \textbf{2.45} \\

\rowcolor{blue!10}
\methodname{}-D (\texttt{Qwen3-14B})
& 61.3 & 18.2 & 7.26 & 3.00 \\

\rowcolor{blue!10}
\methodname{}-D (\texttt{Qwen3-8B})
& \textbf{65.5} & 18.4 & 7.44 & 3.00 \\

\rowcolor{blue!10}
\methodname{}-D (\texttt{Phi-4})
& 61.3 & 18.6 & \textbf{6.70} & 2.62 \\

\bottomrule
\end{tabular}
\end{adjustbox}

\caption{
Performance on AppWorld using \texttt{gpt-4.1} as the main agent paired
with various open-weight models as compressors. \methodname{}
increases average task success (up to 65.5\%) while reducing context
dependency.
}
\label{tab:appworld_open_models_avg}
\end{minipage}

\end{table*}
\textbf{Overall Performance (RQ1)}
Table ~\ref{tab:all_benchmarks} presents our performance using \texttt{gpt-4.1} for both the main and draft models. We evaluate two strategies: \textbf{FOCUS-O(ptimistic)} (retains spans explicitly cited for future steps) and \textbf{FOCUS-D(efensive)} corresponding to Section~\ref{method: defensive_verification}.
On \textbf{AppWorld}, \methodname{} yields strong simultaneous improvements. FOCUS-O reduces peak tokens and dependency by 35\% and 60\% respectively while retaining baseline accuracy. The defensive guardrail further improves the task success, jumping to 64.9\% absolute accuracy (+15.9\% relative to no compression) while still keeping the context costs low.
These trends hold across benchmarks. On \textbf{OfficeBench}, FOCUS surpasses all baselines with an accuracy of 78.9\% while reducing peak tokens and dependency by 47\% and 73\% correspondingly compared to the uncompressed baseline. On \textbf{8-QA}, \methodname{} improves the exact match (EM) scores while maintaining a tighter context budget. Indicating that utility-driven pruning acts as a powerful regularizer, decluttering the context window and guiding the agent toward accurate reasoning. 
On \textbf{WebVoyager}, \methodname{} improves accuracy by $+4.5$ over NC, reducing peak tokens by $27\%$ (Table~\ref{tab:webvoyager}). On \textbf{$\tau^2-$Bench} (Table~\ref{tab:tau2}), \methodname{} improves task success by $+5.5$ over NC while maintaining the context budget.
To further test the efficacy of \methodname{} under a weaker backbone, we additionally evaluate with the smaller \texttt{gpt-4.1-mini} as both the
main agent and the draft model (Table~\ref{tab:appworld_gpt41mini_avg}). Even in
this constrained setting, \methodname{} improves task success by $23\%$
relative to the no-compression baseline  while reducing cumulative dependency and peak tokens by $\sim25\%$ and $\sim21\%$ respectively. 

Overall, these results show that \methodname{} is not tied to a strong backbone: even a small agent compressing its own history with a same-size draft realizes substantial accuracy and efficiency gains, making decision-preserving compression a practical drop-in for cost-constrained deployments.

\begin{figure}[t]
    \centering
    \includegraphics[width=0.6\columnwidth]{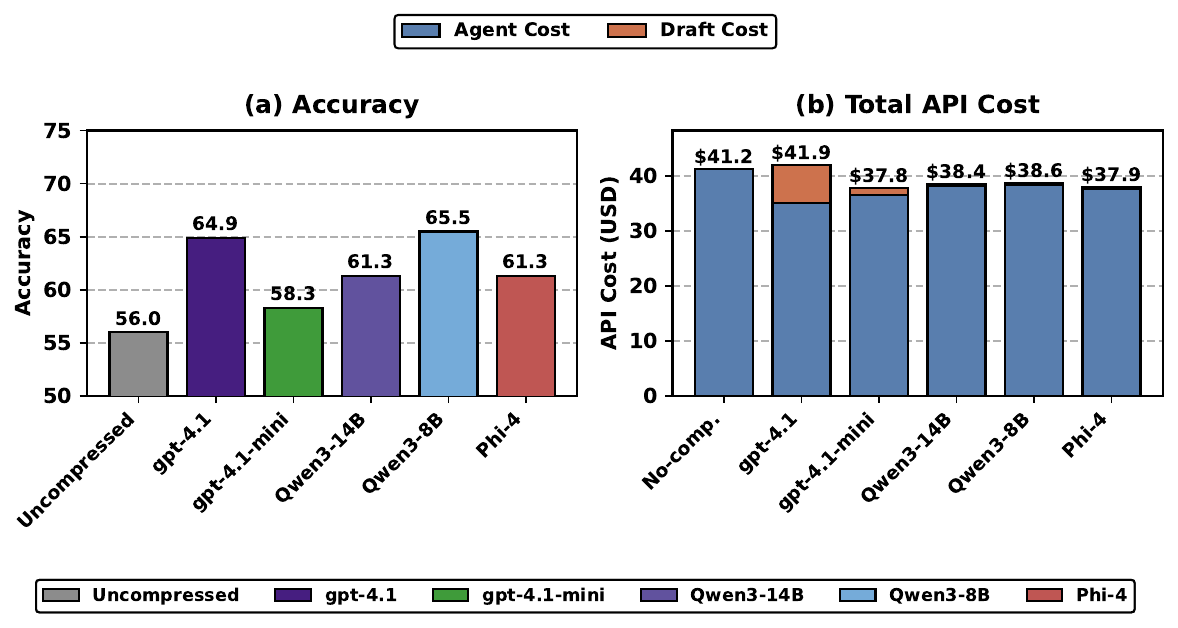}
    \caption{Impact of draft-model choice on \textbf{AppWorld} task accuracy and API cost, with \texttt{gpt-4.1} as the main agent. \textbf{(a)} Task accuracy across different open-source draft models; \methodname{} remains competitive even with small open-source drafts without any fine-tuning. \textbf{(b)} Total API cost (agent~+~draft, stacked); compressing the history reduces serving cost relative to the no-compression baseline.}
    \label{fig:appworld_draft_comparison}
\end{figure}

\textbf{Draft Model Independence (RQ2)}
\label{sec:draft_models}
To assess whether \methodname{} depends on a specific draft model, we fix the primary agent as \texttt{gpt-4.1} and experiment with different draft models across various model families and scales, including \texttt{gpt-4.1-mini}, \texttt{Qwen3-14B}, \texttt{Qwen3-8B}, and \texttt{Phi-4} (Table~\ref{tab:appworld_open_models_avg}, Figure~\ref{fig:appworld_draft_comparison}).
We observe that compression efficacy remains robust across all of them: every draft model preserves or improves task accuracy over the uncompressed baseline at comparable or reduced API cost, indicating that the utility signal driving span selection is not tied to a particular draft family.
For instance, \texttt{Qwen3-8B} as draft significantly improves main agent task performance by 17\% while  cutting average peak tokens by 25\%. With \texttt{gpt-4.1} as both agent and draft model we observe essentially equal cost (0.245 USD and 0.249 USD per task respectively), while smaller, cheaper drafts yield lower API cost and improvements over the uncompressed agent. We attribute this cost reduction to fewer tokens handled by the main agent, indicating that forward-looking span estimation clears unrequired context and helps the agent towards task completion. 
We provide detailed token statistics for these runs in Appendix Table ~\ref{tab:appworld_token_stats}.

\begin{table*}[t]
\centering
\footnotesize

\begin{minipage}[t]{0.48\textwidth}
\vspace{0pt}
\centering
\setlength{\tabcolsep}{2pt}
\renewcommand{\arraystretch}{0.9}

\begin{adjustbox}{max width=\linewidth}
\begin{tabular}{lccc}
\toprule
\textbf{Config}
& \textbf{Acc. $\uparrow$}
& \textbf{Steps $\downarrow$}
& \textbf{Peak (K) $\downarrow$} \\
\midrule
$N=1$  & 71.6 & \textbf{9.2} & 3.78 \\
\rowcolor{blue!10}
$N=3$  & \textbf{77.9} & 9.6 & 4.20 \\
$N=5$  & 74.7 & 9.6 & \textbf{3.74} \\
$N=10$ & 74.7 & 10.2 & 4.11 \\
\bottomrule
\end{tabular}
\end{adjustbox}

\caption{%
Ablation over the number of sampled trajectories ($N$) on OfficeBench,
using \texttt{gpt-4.1} as both the agent and draft model.
We observe diminishing returns beyond $N=3$ rollouts.
}
\label{tab:ablation_n}
\end{minipage}
\hfill
\begin{minipage}[t]{0.48\textwidth}
\vspace{0pt}
\centering
\setlength{\tabcolsep}{2pt}
\renewcommand{\arraystretch}{0.95}

\begin{adjustbox}{max width=\linewidth}
\begin{tabular}{lcc}
\toprule
\textbf{Method}
& \textbf{Latency (s) $\downarrow$}
& \textbf{API Cost (\$) $\downarrow$} \\
\midrule
No Compression
& 47.2
& 9.47 \\

\methodname{} (sequential)
& 40.2
& \textbf{5.96} \\

\rowcolor{blue!10}
\methodname{} (parallel)
& \textbf{36.5}
& 6.16 \\
\bottomrule
\end{tabular}
\end{adjustbox}

\caption{%
Mean wall-clock latency per task and total API cost
(agent~+~draft) on the OfficeBench test split, using
\texttt{gpt-4.1} as the agent and \texttt{gpt-4.1-mini} as the
 draft model. A complete token, request, and compression breakdown
is provided in Table~\ref{tab:latency_appendix}.
}
\label{tab:latency_cost}
\end{minipage}

\end{table*}

\noindent \textbf{Rollout Ablations: }
We perform a sensitivity study to show the effect of the number of Monte Carlo plan rollouts used to estimate span utility. As shown in Table \ref{tab:ablation_n}, increasing N from 1 to 3 substantially improves performance, raising task success from 71.6\% to 77.9\% on Officebench. 
This suggests that 
stochastic plan sketches provide a more stable estimate of future dependencies and reduces variance.
However, the gains quickly saturate beyond N=3.

\noindent \textbf{Memory Budget and Compression Frequency (RQ3): }
\label{sec:ablations}
We also evaluate the impact of the memory threshold $\delta_{\text{mem}}$ on compression frequency and the efficiency-performance trade-off on AppWorld. 
\begin{wrapfigure}{l}{0.45\linewidth} 
\vspace{-0.5em} 
\centering 
\includegraphics[width=\linewidth]{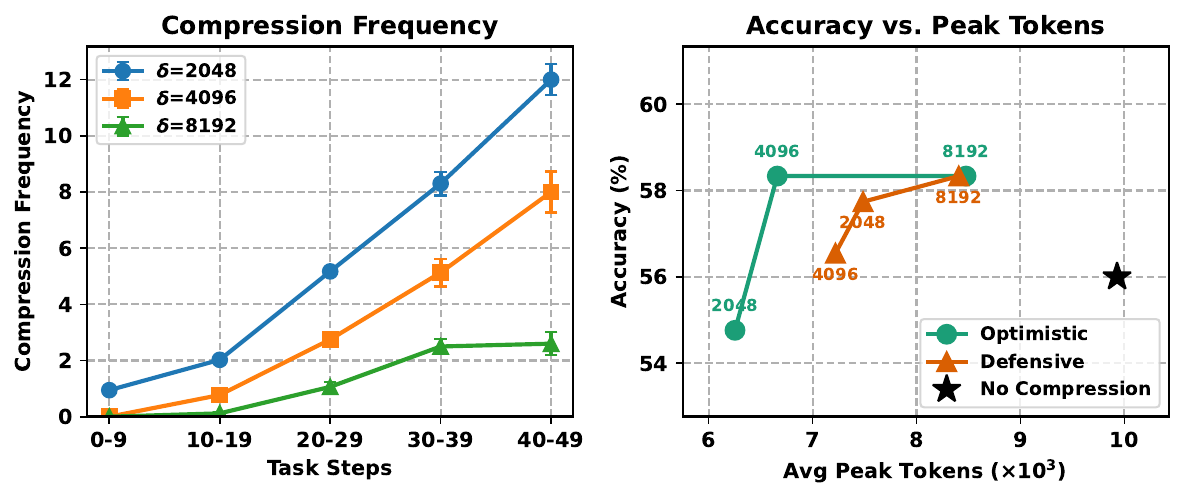} 
\caption{Memory budget ablations on AppWorld with \texttt{gpt-4.1-mini} as draft model. \textbf{Left:} Compression frequency increases with task length and decreases with larger $\delta_{\text{mem}}$. \textbf{Right:} Pareto frontier of accuracy vs.\ average peak tokens across $\delta_{\text{mem}}$ settings.} 
\label{fig:ablation} 
\vspace{-1em} 
\end{wrapfigure}
Figure~\ref{fig:ablation} (left) illustrates how $\delta$ controls the compression frequency. We observe that smaller thresholds trigger early and frequent compressions as tasks progress, whereas a relaxed threshold defers intervention until later steps. 
Figure~\ref{fig:ablation} (right) plots accuracy versus average peak token usage. We observe that lowering $\delta_{\text{mem}}$ reduces peak token consumption, with moderate thresholds providing the best trade-off between efficiency and accuracy.

\noindent \textbf{Cost and Latency Analysis (RQ4):}
\label{sec:cost_latency}
\methodname{} reduces total API cost by 37\% relative to the no-compression baseline 
 on OfficeBench (Table ~\ref{tab:latency_cost}). 
The resulting agent-side savings far outweigh the draft-model overhead, which accounts for only 3\% of the total cost. The $N$ rollouts are conditionally independent given a shared prompt and can therefore be issued concurrently rather than sequentially, lowering the per-event compression latency by $\sim\!2.5\times$ (refer Table~\ref{tab:latency_appendix}) and reducing mean wall-clock latency by $\sim\!10\%$ at essentially identical token cost. 
A full per-stage breakdown is reported in Table~\ref{tab:latency_appendix}.

\section{Conclusion}

In this work, we introduce FOCUS, a training-free framework for context compression in language model agents. We reformulate context compression as a decision-preserving causal abstraction problem, rather than a redundancy reduction task, and show that this formulation aligns with an Information Bottleneck objective that balances compression with preservation of future decision-relevant information. We further derive a novel efficient approximation of the IB objective through stochastic plan rollouts. 
Our method achieves new state-of-the-art performance on benchmarks. This research opens avenues for integrating decision-theoretic principles into LLM system design.

\section*{AI Use Statement} 

Generative AI tools were used solely to improve the presentation of this manuscript, including grammar, syntax, readability, verification of the theoretical results and text and table formatting/refinement. These tools were not used to generate the research ideas, methodology, theoretical contributions, experimental design, results, or scientific conclusions presented in this work. All such contributions originate from the authors. The authors reviewed all AI-assisted edits and take full responsibility for the final content of the manuscript.

\section*{Reproducibility Statement} 
We have taken several steps to facilitate the reproducibility of our results. The paper provides a complete description of the proposed \methodname{} framework, including the span utility formulation, rollout-based dependency estimation procedure, compression algorithm, and defensive verification strategy (Sections~3--4). All theoretical assumptions, derivations, and proofs are provided in Appendix \ref{theoretical_analysis}. Experimental settings, benchmark descriptions, evaluation protocols, model configurations, hyperparameters, and implementation details are described in Section~5 and Appendix \ref{app:implementation_details}. 
Finally, we include the full prompts, hyperparameters, configurations etc. required to reproduce the reported experiments in the Appendix. 
Together, these materials enable independent verification of both the theoretical and empirical results presented in this work.

\bibliography{ICLR27/custom}
\bibliographystyle{ICLR27/iclr2027_conference}

\newpage
\appendix

\section{Appendix} \label{sec:appendix}

\subsection{Theoretical Analysis}
\label{theoretical_analysis}

In this section we provide the theoretical basis for the design choices of
\methodname{}. The guarantees are deliberately \emph{one-sided}: we bound
the decision loss incurred by the spans \methodname{} \emph{discards},
rather than claiming that its retention scores reproduce the exact ranking
of an intractable objective. Concretely, we address the following:

\noindent\textbf{1) Why counterfactual utility is the right objective.}
Proposition~\ref{prop:counterfactual-ib} shows that the expected
counterfactual utility of a span equals its leave-one-out conditional
mutual information with the future, and that, under a bounded
future-interaction condition
(Assumption~\ref{assumption:future-coverage-invariance}), the total
utility of a retained trace matches its predictive information
$I(Z_t;Y\mid g)$ up to an additive error $\bar\varepsilon$. Imposing the
compression term as an entropy budget then makes utility-based selection
the constrained form of the Information Bottleneck objective, with the IB
trade-off coefficient $\beta$ as the inverse budget multiplier.
Corollary~\ref{cor:selection-gap} quantifies the consequence: the trace
selected by utility is within $2\bar\varepsilon$ of the IB-optimal trace in
retained information and in expected decision loss, and coincides with it
whenever the IB optimum is separated by more than $2\bar\varepsilon$. We
give an explicit horizon rate, $\bar\varepsilon/I(Z_t;Y\mid g)=
\mathcal{O}\!\bigl(m(fh+\rho)/K\bigr)$, showing that the approximation
improves with the number $K$ of remaining future events and degrades with
synergy ($f$, $h$) and redundancy ($\rho$) among spans.

\noindent\textbf{2) Why dependency scores are a valid surrogate.}
Exact counterfactual utility requires future-trajectory distributions under
$\mathcal{O}(m)$ counterfactual histories and is undefined for the greedy
policy deployed in practice. \methodname{} instead scores a span by the
probability $u_i$ that the future depends on it. Under a
dependency-directed perturbation model
(Assumption~\ref{ass:dependency-directed}),
Proposition~\ref{prop:utility-decomposition} sandwiches the utility as
$d(u_i\|u_i')\le U(s_i)\le u_i m_i$, and
Corollary~\ref{prop:iff-ranking} turns the upper bound into a safety
certificate: the decision loss of any discarded set $S$ is at most
$m_{\max}\,\mathbb{P}(\text{the future depends on some } s_i\in S)$. This
is exactly what the thresholding rule requires. We state explicitly what
the certificate does \emph{not} imply---agreement between $u$-ranking and
$U$-ranking---and characterize the additional structure (a utility gap, or
a span-independent leak fraction) under which the two rankings coincide
(Remark~\ref{rem:ranking}).

\noindent\textbf{3) How many rollouts suffice.}
The citation frequency $\hat u_i$ is an unbiased Monte Carlo estimate of the
draft model's dependency score; Lemma~\ref{lem:uniform-concentration}
gives a uniform Hoeffding bound with rate $\mathcal{O}(1/\sqrt{N})$, and
Theorem~\ref{thm:high-prob-recovery} shows that, under a utility margin
(Assumption~\ref{assump:utility-margin}), thresholding recovers the
future-relevant set with probability $1-2m\exp(-2N\gamma^2)$. Transferring
these statements from the draft model to the main agent requires a
draft--main coverage condition, $u^{\pi}_S\le\kappa\,u^{q}_S+\epsilon$,
which is the concrete content of the mismatch term
$\delta_{\mathrm{mis}}$ and which we measure per draft model in
Section~\ref{sec:convergence-empirical}. Together these results motivate
the $N$-sweep ablation and explain why $N$ is not an arbitrary
hyperparameter.

\noindent\textbf{4) Why defensive verification helps.}
Proposition~\ref{prop:missed-span-loss} bounds the decision loss by the
summed utility of the spans \methodname{} misses; because
$S_{\mathrm{keep}}=S_U\cup S_R$, adding the verification set can only
shrink this missed set (Theorem~\ref{thm:high-prob-recovery}). Verification
targets precisely the spans for which the certificate in (2) is weakest:
rarely cited but high-impact spans (small $u_i$, large $m_i$) such as
failed calls and state constraints.

The purpose of this section is not to assume that the draft model perfectly
simulates the environment. Instead, we characterize the conditions under
which the \methodname{} estimator is sufficiently accurate to identify
high-utility spans, and how errors in estimation, span interaction, and
draft--main mismatch enter the final decision-preservation guarantee.

Throughout, $Y$ denotes the random \emph{set} of future events (sub-goals,
tool calls, or actions) that the agent will encounter between step $t$ and
termination under goal $g$ (Definition~\ref{def:dependency-score}); all
expectations are over $Y\sim P(\cdot\mid H_t,g)$ and no result requires the
realised future to be known. Recall that the counterfactual future utility
of a span $s_i$ is
\begin{equation}
U(s_i)
=
D\!\left(
P(Y\mid H_t,g)
\;\|\;
P(Y\mid H_t^{-i},g)
\right),
\label{eq:theory-counterfactual-utility}
\end{equation}
where $H_t^{-i}=H_t\setminus\{s_i\}$ and $D$ is taken to be the KL
divergence unless stated otherwise. In practice, \methodname{} does not
compute $U(s_i)$. It estimates the dependency score $u_i$ through
draft-model plan rollouts and uses the empirical estimate $\hat u_i$ (and
its set-level analogue $\hat u_S$) for retention decisions; the remainder
of this appendix analyses that procedure.

\subsection{From Dependency Scores to Counterfactual Utility}
\label{monotone_ranking}

In this section we relate the divergence-based counterfactual utility
$U(s_i)$ to the
dependency score $u_i$ that \methodname{} estimates from Monte Carlo
plan rollouts. The guarantee we establish is deliberately
\emph{one-sided}: a span that is rarely needed by the future has
provably small counterfactual utility, and hence can be dropped at
provably small decision cost. This is exactly the property required by
the thresholding rule in Eq.~\ref{eq:theory-su}. We do not claim, and
the compression rule does not require, that $u_i$ reproduces the
\emph{ordering} of $U(s_i)$ among retained spans; we characterize in
Remark~\ref{rem:ranking} the additional structure under which it does.

\paragraph{Future variable.}
Throughout, $Y$ denotes the random \emph{set} of future events
(sub-goals, tool calls, or actions) that the agent will encounter
between step $t$ and termination under goal $g$; a realisation
$Y=\{y_1,\dots,y_K\}$ is one possible remaining workload. All
expectations below are over $Y\sim P(\cdot\mid H_t,g)$ unless stated
otherwise; no quantity requires the realised future to be known, which
is precisely why $Y$ is integrated out via rollouts.

\begin{definition}[Dependency indicator and dependency score]
\label{def:dependency-score}
For a span $s_i$ and a realisation $Y$ of the future, let
\[
D_i(Y) := \mathbf{1}\{\exists\, y\in Y:\ y \text{ depends on } s_i\}.
\]
The \emph{dependency score} of $s_i$ is the probability that the future
needs it,
\[
u_i := \mathbb{E}_{Y\sim P(\cdot\mid H_t,g)}\bigl[D_i(Y)\bigr]
     = \mathbb{P}\bigl(D_i(Y)=1 \mid H_t,g\bigr).
\]
For a set of spans $S\subseteq H_t$ we write
$D_S(Y):=\mathbf{1}\{\exists\, s_i\in S:\ D_i(Y)=1\}$ and
$u_S:=\mathbb{P}(D_S(Y)=1\mid H_t,g)$. By the union bound,
$u_S\le\sum_{s_i\in S}u_i$.
\end{definition}

\begin{definition}[Signed perturbation]
\label{def:signed-perturbation}
For a span $s_i$ let $H_t^{-i}:=H_t\setminus\{s_i\}$ and define the
signed log-perturbation induced by its removal,
\[
\delta_i(Y) := \log\frac{P(Y\mid H_t,g)}{P(Y\mid H_t^{-i},g)},
\qquad
\text{so that}\qquad
U(s_i)=\mathbb{E}_{Y\sim P(\cdot\mid H_t,g)}\bigl[\delta_i(Y)\bigr].
\]
We further write
\[
u_i' := \mathbb{P}\bigl(D_i(Y)=1 \mid H_t^{-i},g\bigr)
\]
for the probability mass that the counterfactual future places on
$s_i$-dependent realisations.
\end{definition}

\begin{assumption}[Dependency-directed perturbation]
\label{ass:dependency-directed}
Removing a span can only make the futures that depend on it less likely,
and can only make the futures that do not depend on it more likely:
for every realisation $Y$,
\[
\delta_i(Y)\ \ge\ 0 \quad\text{if } D_i(Y)=1,
\qquad\qquad
\delta_i(Y)\ \le\ 0 \quad\text{if } D_i(Y)=0 .
\]
The same is assumed for sets $S\subseteq H_t$ with $D_S$ in place of
$D_i$ and $H_t\setminus S$ in place of $H_t^{-i}$.
\end{assumption}

Assumption~\ref{ass:dependency-directed} formalises the intuition that
deleting $s_i$ drains probability from the trajectories that would have
used it and redistributes that mass onto trajectories that do not. Note
that $\delta_i$ is \emph{signed}: the second clause is not optional but
is forced in aggregate by normalisation, since
$\sum_Y P(Y\mid H_t,g)=\sum_Y P(Y\mid H_t^{-i},g)=1$. An immediate
consequence is $u_i'\le u_i$.

\begin{proposition}[Utility decomposition and sandwich bound]
\label{prop:utility-decomposition}
Under Assumption~\ref{ass:dependency-directed}, the KL-based
counterfactual utility decomposes as
\[
U(s_i) \;=\; u_i\, m_i \;+\; r_i,
\qquad
m_i := \mathbb{E}\bigl[\delta_i(Y)\mid D_i(Y)=1\bigr]\ \ge 0,
\qquad
r_i := \mathbb{E}\bigl[\delta_i(Y)\,\mathbf{1}\{D_i(Y)=0\}\bigr]\ \le 0 .
\]
Moreover, with $d(p\,\|\,q):=p\log\frac{p}{q}+(1-p)\log\frac{1-p}{1-q}$
the binary KL divergence,
\[
d\bigl(u_i\,\|\,u_i'\bigr)\ \le\ U(s_i)\ \le\ u_i\, m_i .
\]
\end{proposition}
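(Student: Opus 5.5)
The decomposition is immediate from Definition~\ref{def:signed-perturbation}. We have $U(s_i)=\mathbb{E}[\delta_i(Y)]$ with $Y\sim P(\cdot\mid H_t,g)$. We split this expectation on the event $\{D_i(Y)=1\}$ and its complement:
\[
U(s_i)=\mathbb{E}\bigl[\delta_i\mathbf{1}\{D_i=1\}\bigr]+\mathbb{E}\bigl[\delta_i\mathbf{1}\{D_i=0\}\bigr]=u_i\,m_i+r_i .
\]
If $u_i=0$, we set $m_i:=0$ by convention. Assumption~\ref{ass:dependency-directed} gives $\delta_i\ge0$ on $\{D_i=1\}$, so $m_i\ge0$. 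It also gives $\delta_i\le0$ on $\{D_i=0\}$, so $r_i\le0$. Hence the upper bound $U(s_i)\le u_i m_i$ follows at once by dropping $r_i$.

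For the lower bound we use the data-processing inequality for KL. The map $Y\mapsto D_i(Y)$ is a deterministic function of the future set. Pushing $P(\cdot\mid H_t,g)$ forward under this map gives the Bernoulli law $\mathrm{Bern}(u_i)$. Pushing $P(\cdot\mid H_t^{-i},g)$ forward gives $\mathrm{Bern}(u_i')$.

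Here we must check one point: the dependency relation ``$y$ depends on $s_i$'' has to be a fixed property of the realisation $Y$. It cannot change with the conditioning history. This is implicit in Definition~\ref{def:dependency-score}, and we state it explicitly, so that the same function $D_i$ is applied under both laws. Data processing then yields
\[
d(u_i\|u_i')=D_{\mathrm{KL}}\bigl(\mathrm{Bern}(u_i)\,\|\,\mathrm{Bern}(u_i')\bigr)\le D_{\mathrm{KL}}\bigl(P(Y\mid H_t,g)\,\|\,P(Y\mid H_t^{-i},g)\bigr)=U(s_i).
\]

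If one prefers a self-contained route, the same bound follows from the log-sum inequality. Apply it separately to the $Y$'s with $D_i=1$ and with $D_i=0$. This gives $\sum_{D_i=1}P\log(P/P')\ge u_i\log(u_i/u_i')$, and the analogue for the complement with $1-u_i$ and $1-u_i'$. Summing the two recovers $d(u_i\|u_i')$.

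The main obstacle is not the algebra but the measure-theoretic edge cases. First, absolute continuity must hold, namely $P(\cdot\mid H_t,g)\ll P(\cdot\mid H_t^{-i},g)$. Without it $U(s_i)=\infty$. In that case the upper bound $u_im_i$ is also infinite, since $m_i=\infty$, so the sandwich holds trivially and we note this. Second, the degenerate cases $u_i\in\{0,1\}$ and $u_i'=0$ must be handled with the conventions $0\log0=0$ and $p\log(p/0)=\infty$ for $p>0$.

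As a consistency check, $u_i'\le u_i$ follows from $\delta_i\ge0$ pointwise on $\{D_i=1\}$. Summing $P'\le P$ over that event gives it directly.
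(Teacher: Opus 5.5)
Your proposal is correct and follows the paper's proof essentially step for step. You split $U(s_i)=\mathbb{E}[\delta_i(Y)]$ on $\{D_i(Y)=1\}$ by total expectation, read the signs of $m_i$ and $r_i$ off Assumption~\ref{ass:dependency-directed} to get the upper bound, and obtain $d(u_i\|u_i')\le U(s_i)$ by data processing under the deterministic map $Y\mapsto D_i(Y)$; your log-sum alternative, the explicit requirement that the same history-independent $D_i$ be used under both laws, and the edge-case bookkeeping (infinite KL, $u_i\in\{0,1\}$, $u_i'=0$) are sound refinements that the paper leaves implicit.
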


\begin{proof}
By the law of total expectation with respect to $D_i(Y)$,
\[
U(s_i)
=\mathbb{P}(D_i{=}1)\,\mathbb{E}[\delta_i\mid D_i{=}1]
+\mathbb{E}[\delta_i\,\mathbf{1}\{D_i{=}0\}]
= u_i m_i + r_i .
\]
The signs follow from Assumption~\ref{ass:dependency-directed}:
$\delta_i\ge0$ on $\{D_i=1\}$ gives $m_i\ge0$, and $\delta_i\le0$ on
$\{D_i=0\}$ gives $r_i\le0$; hence $U(s_i)\le u_i m_i$.
For the lower bound, $D_i$ is a deterministic function of $Y$, so by the
data-processing inequality the divergence between the two future
distributions is at least the divergence between their images under
$D_i$, which are $\mathrm{Bernoulli}(u_i)$ and $\mathrm{Bernoulli}(u_i')$:
$U(s_i)\ge d(u_i\,\|\,u_i')$.
\end{proof}

\begin{remark}[Interpretation of $m_i$]
\label{rem:m-is-kl}
Writing $P_1(\cdot):=P(Y\mid H_t,g,D_i{=}1)$ and
$P_1'(\cdot):=P(Y\mid H_t^{-i},g,D_i{=}1)$ for the two future
distributions restricted to $s_i$-dependent realisations, one has
\[
m_i \;=\; D_{\mathrm{KL}}\bigl(P_1\,\|\,P_1'\bigr) \;+\; \log\frac{u_i}{u_i'} ,
\]
both terms being non-negative. Thus $m_i$ measures how much removing
$s_i$ distorts the futures that actually use it: the first term is the
change in \emph{which} dependent future occurs, the second is the loss
of dependent mass. $m_i$ is small for spans whose content the agent can
cheaply re-derive (e.g.\ an identifier obtainable by one extra call) and
large for spans that gate an irreversible branch (e.g.\ a
failed-payment observation that separates ``stop'' from ``retry'').
\end{remark}

\begin{assumption}[Bounded conditional impact]
\label{ass:impact-scaling}
There is a constant $m_{\max}<\infty$ such that $m_i\le m_{\max}$ for
every span $s_i\in H_t$, and $m_S\le m_{\max}$ for every set
$S\subseteq H_t$ (with $m_S$ defined as in
Proposition~\ref{prop:utility-decomposition} using $D_S$).
\end{assumption}

Assumption~\ref{ass:impact-scaling} is a boundedness condition, not a
monotonicity condition: it does not tie $m_i$ to $u_i$. It holds
whenever the policy has a positive-temperature (full-support) action
distribution, in which case $\delta_i$ is bounded; for the greedy
($T{=}0$) policy used at deployment we take $P(\cdot\mid H_t,g)$ to be
the tempered policy of which the greedy action is the mode, or
equivalently replace $D_{\mathrm{KL}}$ by a bounded divergence such
as total variation, for which $m_{\max}\le 1$ trivially.

\begin{corollary}[One-sided safety certificate]
\label{prop:iff-ranking}
Under Assumptions~\ref{ass:dependency-directed}
and~\ref{ass:impact-scaling}, for every span and every set of spans,
\[
U(s_i)\ \le\ m_{\max}\, u_i,
\qquad\qquad
\]
\[
U(S)\ :=\ D_{\mathrm{KL}}\bigl(P(Y\mid H_t,g)\,\|\,P(Y\mid H_t\setminus S,g)\bigr)
\ \le\ m_{\max}\, u_S\ \le\ m_{\max}\sum_{s_i\in S}u_i .
\]
In particular, the set $S_{\mathrm{drop}}=\{s_i:\ u_i<\tau\}$ removed by
utility thresholding satisfies
$U(S_{\mathrm{drop}})\le m_{\max}\,u_{S_{\mathrm{drop}}}\le m_{\max}\,\tau\,|S_{\mathrm{drop}}|$.
\end{corollary}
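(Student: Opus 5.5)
The single-span bound comes straight from Proposition~\ref{prop:utility-decomposition}, which gives $U(s_i)=u_i m_i+r_i$ with $r_i\le 0$, so $U(s_i)\le u_i m_i$. Assumption~\ref{ass:impact-scaling} then gives $m_i\le m_{\max}$, and since $u_i\ge 0$ we get $U(s_i)\le m_{\max}u_i$.

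For a set $S\subseteq H_t$, the plan is to repeat the proof of Proposition~\ref{prop:utility-decomposition} with $D_S$ in place of $D_i$ and $H_t\setminus S$ in place of $H_t^{-i}$. Define $\delta_S(Y):=\log\bigl(P(Y\mid H_t,g)/P(Y\mid H_t\setminus S,g)\bigr)$, so that $U(S)=\mathbb{E}_{Y\sim P(\cdot\mid H_t,g)}[\delta_S(Y)]$. Conditioning on $D_S(Y)$ gives
\[
U(S)=u_S\,m_S+r_S,\qquad m_S:=\mathbb{E}[\delta_S\mid D_S{=}1],\qquad r_S:=\mathbb{E}[\delta_S\mathbf{1}\{D_S{=}0\}].
\]
The set version of Assumption~\ref{ass:dependency-directed} gives $\delta_S\le 0$ on $\{D_S=0\}$, so $r_S\le 0$. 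The set version of Assumption~\ref{ass:impact-scaling} gives $m_S\le m_{\max}$. Together these yield $U(S)\le m_{\max}u_S$. The second inequality, $u_S\le\sum_{s_i\in S}u_i$, is the union bound already noted in Definition~\ref{def:dependency-score}, using $D_S=\max_{s_i\in S}D_i$.

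For the thresholding consequence, take $S=S_{\mathrm{drop}}$. The chain above gives $U(S_{\mathrm{drop}})\le m_{\max}u_{S_{\mathrm{drop}}}\le m_{\max}\sum_{s_i\in S_{\mathrm{drop}}}u_i$. Each summand satisfies $u_i<\tau$, so the sum is at most $\tau|S_{\mathrm{drop}}|$.

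There is no real obstacle here, because the corollary only assembles earlier results. The one delicate point is the degenerate case $u_S=0$, where $m_S$ is a conditional expectation on a null event. In that case $U(S)=r_S\le 0$, and nonnegativity of KL forces $U(S)=0$, so the bound holds trivially. Similarly, if $P(Y\mid H_t\setminus S,g)=0$ on some realisation, then $\delta_S=+\infty$ there. Assumption~\ref{ass:impact-scaling} (via $m_S<\infty$) rules this out on $\{D_S=1\}$. On $\{D_S=0\}$, Assumption~\ref{ass:dependency-directed} gives $\delta_S\le 0$, which means the counterfactual probability there is at least the original one, so no division by zero arises.
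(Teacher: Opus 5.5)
Your proposal is correct and follows the paper's proof: you decompose $U(s_i)$ and $U(S)$ as $u\,m + r$ via Proposition~\ref{prop:utility-decomposition} and its set version, drop the non-positive $r$, bound $m$ by $m_{\max}$, and finish with the union bound and $u_i<\tau$ on $S_{\mathrm{drop}}$. Your treatment of the null event $u_S=0$ and of possibly infinite log-ratios is a small refinement that the paper leaves implicit.
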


\begin{proof}
Apply Proposition~\ref{prop:utility-decomposition} to $s_i$ (resp.\ to
$S$, using the set versions of Assumption~\ref{ass:dependency-directed}
and Definition~\ref{def:dependency-score}), drop the non-positive term
$r$, and bound $m$ by $m_{\max}$. The final inequality is the union
bound $u_S\le\sum_{s_i\in S}u_i$.
\end{proof}

Corollary~\ref{prop:iff-ranking} is the statement that justifies
\methodname{}'s retention rule: the counterfactual decision loss of the
discarded history is controlled by how rarely the future needs it. The
set-level form matters in practice. Two spans that carry the same fact
(e.g.\ an access token that appears in two observations) each have
$u_i\approx0$ individually, since the future needs \emph{one} of them
but not either one specifically; $u_S$ for the pair, however, is the
probability that the future needs the token at all, and it is $u_S$,
not $\sum_i u_i$, that bounds the loss of dropping both. The rollout
estimator below is therefore applied to the candidate dropped set as a
whole (Eq.~\ref{eq:set-estimator}), not only span by span.

\begin{remark}[What the guarantee does not say]
\label{rem:ranking}
Given only $u_i$, Proposition~\ref{prop:utility-decomposition} confines
$U(s_i)$ to the interval $[\,d(u_i\|u_i'),\,u_i m_i\,]$, and these
intervals overlap across spans. Hence $u_i\ge u_j$ does \emph{not} in
general imply $U(s_i)\ge U(s_j)$: a span needed by most futures but
cheaply re-derivable (large $u$, small $m$) can have lower utility than
a rarely needed constraint that redirects the agent (small $u$, large
$m$). Ranking agreement can be stated under additional structure:
\begin{enumerate}
\item[(a)] \emph{Gap condition.} If $m_{\min}\le m_i\le m_{\max}$ and
$|r_i|\le \rho_{\max}$ for all $i$, then
$u_i\,m_{\min}-\rho_{\max}>u_j\,m_{\max}$ implies $U(s_i)>U(s_j)$;
i.e.\ the ordering is preserved for pairs separated by a multiplicative
margin of order $m_{\max}/m_{\min}$.
\item[(b)] \emph{Proportional perturbation.} If removing $s_i$ rescales
every $s_i$-dependent future by the same factor $1-\rho$
(and every non-dependent future by the compensating factor
$\tfrac{1-(1-\rho)u_i}{1-u_i}$), then $u_i'=(1-\rho)u_i$ and the sandwich
of Proposition~\ref{prop:utility-decomposition} collapses to the
equality $U(s_i)=d\bigl(u_i\,\|\,(1-\rho)u_i\bigr)$, which is strictly
increasing in $u_i$ for fixed $\rho\in(0,1)$ (the map $u\mapsto
d(u\|cu)$ is convex, vanishes at $0$, and is positive for $u>0$). Under a
span-independent leak fraction $\rho$, ranking by $u_i$ therefore
coincides with ranking by $U(s_i)$.
\end{enumerate}
Neither condition is needed for the thresholding rule.
 
The defensive verification stage
(Section~\ref{sec_missed_spans}) targets precisely the small-$u$,
large-$m$ spans for which (a) and (b) fail.
\end{remark}

\paragraph{From dependency scores to rollouts.}
\methodname{} does not compute $U(s_i)$ or $u_i$ under the main-agent
policy directly. Each draft-model plan sketch $p_k\sim q_\phi(\cdot\mid
H_t,g)$ is a sampled realisation of the future workload $Y$, and its
citation set $\mathrm{Dep}(p_k)$ is the corresponding realisation of
$\{s_i: D_i(Y)=1\}$. The citation frequency
$\hat u_i=\tfrac1N\sum_k\mathbf{1}[s_i\in\mathrm{Dep}(p_k)]$
(Eq.~\ref{eq:theory-estimator}) is therefore an unbiased Monte Carlo
estimate of the \emph{draft's} dependency score
$u_i^{q}:=\mathbb{P}_{q_\phi}(D_i(Y)=1\mid H_t,g)$, and, for a candidate
dropped set $S$,
\begin{equation}
\hat u_S \;=\; \frac1N\sum_{k=1}^{N}\mathbf{1}\bigl[\mathrm{Dep}(p_k)\cap S\neq\varnothing\bigr]
\label{eq:set-estimator}
\end{equation}
is an unbiased estimate of $u^{q}_S$. Transferring
Corollary~\ref{prop:iff-ranking} from $u^{q}$ to the main-agent score
$u^{\pi}$ requires that the draft cite what the agent will need at
least a constant fraction of the time,
$u^{\pi}_S\le\kappa\,u^{q}_S+\epsilon$; this draft--main coverage
condition is the concrete content of the mismatch term
$\delta_{\mathrm{mis}}$ in Assumption~\ref{assump:residual-error}.
Under it, the decision loss of
the dropped set is bounded by
$m_{\max}\bigl(\kappa\,u^{q}_{S_{\mathrm{drop}}}+\epsilon\bigr)$, and the
estimation error of $\hat u_S$ is controlled by the concentration
results of Section~\ref{sec_utility_separation}.

\subsubsection{Information Bottleneck Objective}\label{ib_proof}

\begin{assumption}[Bounded future-interaction]
\label{assumption:future-coverage-invariance}
For each retained span $s_{i_j}$, let
$P_{i_j}=(s_{i_1},\ldots,s_{i_{j-1}})$ denote the preceding retained
spans and $R_{i_j}=H_t^{-i_j}\setminus P_{i_j}$ the remaining spans of
the observed history. Let $Y$ denote the random set of future events
(Definition~\ref{def:dependency-score}). We assume that, given the
preceding retained spans, the dependence between $s_{i_j}$ and
$R_{i_j}$ is only weakly altered by conditioning on the future:
\[
\varepsilon_{i_j}
:=
\bigl|\,
I(s_{i_j};R_{i_j}\mid Y,P_{i_j},g)
-
I(s_{i_j};R_{i_j}\mid P_{i_j},g)
\,\bigr|
\;\le\;\varepsilon,
\qquad\text{for all } j .
\]
The difference is the (conditional) interaction information
$\mathrm{II}(s_{i_j};R_{i_j};Y\mid P_{i_j},g)$; the assumption therefore
states that the retained span and the rest of the history exhibit
neither \emph{synergy} (a future event that requires $s_{i_j}$
\emph{jointly} with a non-retained span) nor \emph{redundancy}
($s_{i_j}$ duplicating information available in $R_{i_j}$) with respect
to the future, beyond a tolerance $\varepsilon$. It holds exactly
($\varepsilon=0$) when $s_{i_j}\perp\!\!\!\perp R_{i_j}\mid P_{i_j},g$
and the events depending on $s_{i_j}$ depend on no span of $R_{i_j}$.
\end{assumption}

\paragraph{Motivation and horizon dependence.}
Model each future event $y\in Y$ as a function of the spans it depends on
and independent noise, and partition the events into those depending on
$s_{i_j}$ only, on $R_{i_j}$ only, on both, or on neither (given
$P_{i_j}$). Conditioning on additional variables changes a mutual
information by at most their entropy, and conditioning on a function of
one argument cannot increase it, so
\[
\varepsilon_{i_j}\;\le\;
H\bigl(E^{\mathrm{both}}_{i_j}\mid P_{i_j},g\bigr)
+
I(s_{i_j};R_{i_j}\mid P_{i_j},g),
\]
where $E^{\mathrm{both}}_{i_j}$ is the set of future events requiring
both $s_{i_j}$ and some span of $R_{i_j}$. The first term is the
synergy contribution, the second the redundancy contribution. If each
span is needed by at most $f$ future events, each event has bounded
entropy $h$, and redundancy is bounded by $\rho$, then
$\sum_j\varepsilon_{i_j}\le m(fh+\rho)$, whereas the retained predictive
information grows with the number $K$ of remaining events,
$I(Z_t;Y\mid g)=\Theta(K)$. As we shall see in Proposition~\ref{prop:counterfactual-ib}, the relative error of the leave-one-out
decomposition  is 
$O\bigl(m(fh+\rho)/K\bigr)$ and vanishes for long horizons with bounded
fan-in and redundancy. This is the precise sense in which the
assumption is a long-horizon assumption: not that the future is
uninformative about the past, but that any single span's interaction
with the rest of the history is a vanishing share of the total
predictive information. The condition fails, and compression should be
conservative, exactly when one future event depends on many spans at
once.

\begin{proposition}
[Counterfactual utility and the Information Bottleneck objective]
\label{prop:counterfactual-ib}
Let \(H_t=(s_1,\ldots,s_n)\) denote the interaction history, and let
\(Z_t=(s_{i_1},\ldots,s_{i_m})\subseteq H_t\) denote a feasible
compressed trace under a fixed compression budget. Let \(Y\) denote
the random set of future events (Definition~\ref{def:dependency-score}). 

For each retained span \(s_{i_j}\), define
\[
P_{i_j}
=
(s_{i_1},\ldots,s_{i_{j-1}})
\]
as the sequence of retained spans preceding \(s_{i_j}\), and define
\[
R_{i_j}
=
H_t^{-i_j}\setminus P_{i_j}
\]
as the remaining spans in the full history.
Suppose Assumption~\ref{assumption:future-coverage-invariance}
(bounded future-interaction) holds with tolerances
\(\varepsilon_{i_1},\ldots,\varepsilon_{i_m}\), and write
\(\bar\varepsilon:=\sum_{j=1}^{m}\varepsilon_{i_j}\).

Assume that the divergence in
Definition~\ref{eq:counterfactual-utility} is the KL divergence:
\[
U(s_i)
=
D_{\mathrm{KL}}\!\left(
P(Y\mid H_t,g)
\;\middle\|\;
P(Y\mid H_t^{-i},g)
\right).
\]
Then the expected counterfactual utility of a span equals its
leave-one-out conditional mutual information:
\[
\mathbb{E}\bigl[U(s_i)\bigr]
=
I(s_i;Y\mid H_t^{-i},g).
\]

Moreover, for every retained span,                                   
\[
\Bigl|\,
I(s_{i_j};Y\mid H_t^{-i_j},g)
-
I\!\left(
s_{i_j};Y
\mid
s_{i_1},\ldots,s_{i_{j-1}},g
\right)
\Bigr|
\;\le\;\varepsilon_{i_j}.
\]
Consequently,                                                          
\[
\Bigl|\,
I(Z_t;Y\mid g)
-
\sum_{j=1}^{m}
\mathbb{E}\bigl[U(s_{i_j})\bigr]
\Bigr|
\;\le\;\bar\varepsilon ,
\]
with equality \(I(Z_t;Y\mid g)=\sum_{j}\mathbb{E}[U(s_{i_j})]\) when
\(\bar\varepsilon=0\).

Therefore, up to an additive error of \(\bar\varepsilon\), selecting a 
feasible set of spans with the largest total expected counterfactual
utility maximises \(I(Z_t;Y\mid g)\), and—imposing the compression
term as a budget constraint as shown below—this selection is the
constrained form of the Information Bottleneck objective
\[
\min_C\;
I(H_t;Z_t\mid g)
-
\beta I(Z_t;Y\mid g).
\]
\end{proposition}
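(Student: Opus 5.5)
The proof has three parts: an identity for the expected utility, a per-span comparison between leave-one-out and chain-rule mutual informations, and an assembly step ending in the Lagrangian reading of the budget.

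\textbf{Part 1: expected utility equals leave-one-out information.} Here the expectation in $\mathbb{E}[U(s_i)]$ is over the joint law of $(s_i, H_t^{-i})$ given $g$, with $U$ evaluated at the realised history. For a realised history, $U(s_i)$ is the KL divergence between $P(Y\mid s_i,H_t^{-i},g)$ and $P(Y\mid H_t^{-i},g)$. Averaging this over $s_i$ given $(H_t^{-i},g)$, and then over $H_t^{-i}$, is exactly the standard representation
\[
I(A;B\mid C)=\mathbb{E}_{A,C}\bigl[D_{\mathrm{KL}}(P_{B\mid A,C}\,\|\,P_{B\mid C})\bigr]
\]
with $A=s_i$, $B=Y$, $C=(H_t^{-i},g)$. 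So $\mathbb{E}[U(s_i)]=I(s_i;Y\mid H_t^{-i},g)$, and this part needs no assumption.

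\textbf{Part 2: per-span comparison.} Since $H_t^{-i_j}=P_{i_j}\cup R_{i_j}$, I apply the chain rule to $I(s_{i_j};R_{i_j},Y\mid P_{i_j},g)$ in both orders:
\[
I(s_{i_j};Y\mid P_{i_j},g)+I(s_{i_j};R_{i_j}\mid Y,P_{i_j},g)
=I(s_{i_j};R_{i_j}\mid P_{i_j},g)+I(s_{i_j};Y\mid R_{i_j},P_{i_j},g).
\]
The last term is $I(s_{i_j};Y\mid H_t^{-i_j},g)$. Rearranging,
\[
I(s_{i_j};Y\mid H_t^{-i_j},g)-I(s_{i_j};Y\mid P_{i_j},g)=I(s_{i_j};R_{i_j}\mid Y,P_{i_j},g)-I(s_{i_j};R_{i_j}\mid P_{i_j},g).
\]
The right-hand side is the interaction information that Assumption~\ref{assumption:future-coverage-invariance} bounds in absolute value by $\varepsilon_{i_j}$. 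This gives the per-span inequality.

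\textbf{Part 3: assembly.} The chain rule gives $I(Z_t;Y\mid g)=\sum_j I(s_{i_j};Y\mid P_{i_j},g)$. Combining this with Part~1 and the triangle inequality over $j$ yields
\[
\Bigl|I(Z_t;Y\mid g)-\sum_j\mathbb{E}[U(s_{i_j})]\Bigr|\le\bar\varepsilon,
\]
with equality of the two quantities when every $\varepsilon_{i_j}=0$.

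For the selection claim, let $Z^\star$ maximise total utility over feasible traces and let $Z'$ be any other feasible trace. Then
\[
I(Z^\star;Y\mid g)\ge\sum_{Z^\star}\mathbb{E}[U]-\bar\varepsilon\ge\sum_{Z'}\mathbb{E}[U]-\bar\varepsilon\ge I(Z';Y\mid g)-2\bar\varepsilon .
\]
So the utility-maximising trace is optimal in $I(Z_t;Y\mid g)$ up to $2\bar\varepsilon$. The statement's additive $\bar\varepsilon$ holds for the objective value itself; the gap between the selected and optimal traces is $2\bar\varepsilon$, and I expect the claim to be read this way, consistent with Corollary~\ref{cor:selection-gap}.

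For the IB form, I use $I(H_t;Z_t\mid g)\le\mathcal{H}(Z_t\mid g)\le B$ as the constraint. The problem ``maximise $I(Z_t;Y\mid g)$ subject to $I(H_t;Z_t\mid g)\le B$'' has Lagrangian $-I(Z_t;Y\mid g)+\mu\,(I(H_t;Z_t\mid g)-B)$. Dividing by $\mu>0$ gives the IB objective with $\beta=1/\mu$, matching the statement's ``inverse multiplier.''

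\textbf{Expected obstacles.} The main subtlety is bookkeeping rather than depth. The tolerances $\varepsilon_{i_j}$ depend on the chosen trace, so the comparison between traces needs $\bar\varepsilon$ to be a uniform bound over feasible $Z$. The Lagrangian step is only a formal duality statement, since the problem is a discrete set selection with no strong duality; I would state it as the standard constrained/penalised correspondence and not claim a duality gap of zero.
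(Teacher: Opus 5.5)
Your proposal is correct and follows the paper's proof essentially step by step. Both arguments use the same four steps:
\begin{enumerate}
\item the expected-KL representation of conditional mutual information;
\item the chain rule applied in two orders to $I(s_{i_j};Y,R_{i_j}\mid P_{i_j},g)$, with the resulting interaction-information difference bounded by the assumption;
\item summation through the chain rule over $Z_t$;
\item the Lagrangian with $\beta$ as the inverse multiplier, where the paper identifies $I(H_t;Z_t\mid g)=H(Z_t\mid g)$ because the span-selection map is deterministic and, like you, notes a possible duality gap.
\end{enumerate}
Your $2\bar\varepsilon$ refinement for the selection claim, together with the observation that $\bar\varepsilon$ must hold uniformly over feasible traces, is exactly what the paper proves separately in Corollary~\ref{cor:selection-gap}.
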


\begin{proof}
We first establish the connection between counterfactual utility and
conditional mutual information. For a span \(s_i\), let
\(H_t^{-i}=H_t\setminus\{s_i\}\). By definition,
\[
U(s_i)
=
D_{\mathrm{KL}}\!\left(
P(Y\mid H_t,g)
\;\middle\|\;
P(Y\mid H_t^{-i},g)
\right).
\]
Taking the expectation over the joint distribution of
\((H_t,Y,g)\), we obtain
\[
\begin{aligned}
\mathbb{E}[U(s_i)]
&=
\mathbb{E}_{H_t,Y,g}
\left[
\log
\frac{P(Y\mid H_t,g)}
     {P(Y\mid H_t^{-i},g)}
\right] \\
&=
\mathbb{E}_{H_t,Y,g}
\left[
\log
\frac{P(Y\mid s_i,H_t^{-i},g)}
     {P(Y\mid H_t^{-i},g)}
\right].
\end{aligned}
\]
Since \(H_t=(H_t^{-i},s_i)\), this is exactly the definition of
conditional mutual information:
\[
\mathbb{E}[U(s_i)]
=
I(s_i;Y\mid H_t^{-i},g).
\]

We next show that, under Assumption~\ref{assumption:future-coverage-invariance}, 
this leave-one-out quantity is within \(\varepsilon_{i_j}\) of the
corresponding term in the chain-rule decomposition of the retained trace.

For a retained span \(s_{i_j}\), write
\[
H_t^{-i_j}
=
(P_{i_j},R_{i_j}),
\]
where
\[
P_{i_j}
=
(s_{i_1},\ldots,s_{i_{j-1}})
\]
contains the preceding retained spans and \(R_{i_j}\) contains the
remaining spans.

By the chain rule of mutual information,
\[
\begin{aligned}
I(s_{i_j};Y,R_{i_j}\mid P_{i_j},g)
&=
I(s_{i_j};R_{i_j}\mid P_{i_j},g) \\
&\quad+
I(s_{i_j};Y\mid P_{i_j},R_{i_j},g).
\end{aligned}
\]
Applying the same chain rule in the opposite order yields
\[
\begin{aligned}
I(s_{i_j};Y,R_{i_j}\mid P_{i_j},g)
&=
I(s_{i_j};Y\mid P_{i_j},g) \\
&\quad+
I(s_{i_j};R_{i_j}\mid Y,P_{i_j},g).
\end{aligned}
\]
Equating these two expressions gives
\[
\begin{aligned}
I(s_{i_j};Y\mid P_{i_j},R_{i_j},g)
&=
I(s_{i_j};Y\mid P_{i_j},g) \\
&\quad+
I(s_{i_j};R_{i_j}\mid Y,P_{i_j},g) \\
&\quad-
I(s_{i_j};R_{i_j}\mid P_{i_j},g).
\end{aligned}
\]
The last two terms are exactly the quantity controlled by
Assumption~\ref{assumption:future-coverage-invariance}:
\[
\Bigl|
I(s_{i_j};R_{i_j}\mid Y,P_{i_j},g)
-
I(s_{i_j};R_{i_j}\mid P_{i_j},g)
\Bigr|
\;\le\;\varepsilon_{i_j}.
\]
Hence, since \(H_t^{-i_j}=(P_{i_j},R_{i_j})\),
\[
\Bigl|
I(s_{i_j};Y\mid H_t^{-i_j},g)
-
I(s_{i_j};Y\mid P_{i_j},g)
\Bigr|
\;\le\;\varepsilon_{i_j},
\]
that is,
\[
\Bigl|
I(s_{i_j};Y\mid H_t^{-i_j},g)
-
I\!\left(
s_{i_j};Y
\mid
s_{i_1},\ldots,s_{i_{j-1}},g
\right)
\Bigr|
\;\le\;\varepsilon_{i_j}.
\]
The bound is tight in the sense that it is attained with equality
when \(\varepsilon_{i_j}=0\), i.e.\ when \(s_{i_j}\) and \(R_{i_j}\)
have no interaction with respect to the future.

Combining this with the relationship between counterfactual utility
and conditional mutual information gives                               
\[
\Bigl|
\mathbb{E}[U(s_{i_j})]
-
I\!\left(
s_{i_j};Y
\mid
s_{i_1},\ldots,s_{i_{j-1}},g
\right)
\Bigr|
\;\le\;\varepsilon_{i_j}.
\]

Now apply the standard chain rule of mutual information to the
retained trace \(Z_t=(s_{i_1},\ldots,s_{i_m})\) and sum the per-span
bounds:                                                                
\[
\begin{aligned}
\Bigl|\,
I(Z_t;Y\mid g)
-
\sum_{j=1}^{m}\mathbb{E}[U(s_{i_j})]
\Bigr|
&=
\Bigl|\,
\sum_{j=1}^{m}
\Bigl[
I\!\left(
s_{i_j};Y
\mid
s_{i_1},\ldots,s_{i_{j-1}},g
\right)
-
I(s_{i_j};Y\mid H_t^{-i_j},g)
\Bigr]
\Bigr| \\
&\le
\sum_{j=1}^{m}\varepsilon_{i_j}
\;=\;\bar\varepsilon .
\end{aligned}
\]
Thus, under Assumption~\ref{assumption:future-coverage-invariance}, the
predictive information retained by \(Z_t\) equals the sum of the
expected counterfactual utilities of its retained spans up to an
additive error \(\bar\varepsilon\), which vanishes when no retained
span interacts with the remaining history with respect to the future.
By the horizon argument accompanying the assumption,
\(\bar\varepsilon\le m(fh+\rho)\) while \(I(Z_t;Y\mid g)=\Theta(K)\),
so the relative error is \(O\bigl(m(fh+\rho)/K\bigr)\) for long
horizons.

Finally, consider the Information Bottleneck objective
\[
\mathcal{L}_{\mathrm{IB}}(C)
=
I(H_t;Z_t\mid g)
-
\beta\, I(Z_t;Y\mid g),
\qquad \beta>0 .
\]
Because the compressor $C$ is a deterministic span-selection map,
$Z_t$ is a function of $H_t$ and the compression term reduces to the
entropy of the retained trace,
\[
I(H_t;Z_t\mid g)=H(Z_t\mid g)-H(Z_t\mid H_t,g)=H(Z_t\mid g).
\]
Rather than treating this term as constant, we impose it as a
\emph{budget}: a compressed trace is feasible if it retains at most
$B$ nats of the history,
\[
\mathcal{Z}(B):=\bigl\{\,Z_t\subseteq H_t:\ H(Z_t\mid g)\le B\,\bigr\}.
\]
Writing $\ell_i$ for the token length of span
$s_i$ under the agent's tokenizer, and $\bar h$ for the per-token
code-length bound of the model, $H(Z_t\mid g)\le \bar h\sum_{s_i\in Z_t}\ell_i$,
so the token constraint $\sum_{s_i\in Z_t}\ell_i\le B_{\mathrm{tok}}$
implies the entropy constraint with $B=\bar h\,B_{\mathrm{tok}}$.

The constrained decision-preservation problem is therefore
\begin{equation}
\max_{Z_t\in\mathcal{Z}(B)}\; I(Z_t;Y\mid g)
\;\;=\;\;
\max_{Z_t\subseteq H_t}\; I(Z_t;Y\mid g)
\quad\text{s.t.}\quad H(Z_t\mid g)\le B .
\label{eq:ib-constrained}
\end{equation}
By the decomposition established above,
$I(Z_t;Y\mid g)=\sum_{s_i\in Z_t}\mathbb{E}[U(s_i)]$ (up to the
approximation error $\sum_j\varepsilon_{i_j}$ of the approximation form),
so Eq.~\ref{eq:ib-constrained} reads
\begin{equation}
\max_{Z_t\subseteq H_t}\;\sum_{s_i\in Z_t}\mathbb{E}[U(s_i)]
\quad\text{s.t.}\quad H(Z_t\mid g)\le B .
\label{eq:utility-knapsack}
\end{equation}
This is a budgeted selection (knapsack) problem in which each span
contributes its expected counterfactual utility to the objective and its
code length to the constraint.

Introducing a Lagrange multiplier $\lambda\ge 0$ for the budget gives
the Lagrangian
\[
\mathcal{L}(Z_t,\lambda)
=
\sum_{s_i\in Z_t}\mathbb{E}[U(s_i)]
-\lambda\bigl(H(Z_t\mid g)-B\bigr)
=
I(Z_t;Y\mid g)-\lambda\, I(H_t;Z_t\mid g)+\lambda B .
\]
Dividing by $\lambda>0$ and dropping the constant $\lambda B$, maximising
$\mathcal{L}(\cdot,\lambda)$ over $Z_t$ is equivalent to
\[
\min_{Z_t\subseteq H_t}\;
I(H_t;Z_t\mid g)-\beta\, I(Z_t;Y\mid g),
\qquad \beta:=1/\lambda ,
\]
which is exactly $\mathcal{L}_{\mathrm{IB}}(C)$. Hence, for every budget
$B$ there is a trade-off coefficient $\beta=1/\lambda^\star(B)$ such that
the budgeted utility-selection problem \eqref{eq:utility-knapsack} and
the IB objective share the same Lagrangian, and the IB solution at
$\beta$ is budget-feasible for $B=I(H_t;Z_t^\star\mid g)$; conversely,
sweeping $\beta$ traces the Lagrangian relaxation of the constraint set
$\mathcal{Z}(B)$ as $B$ varies. (Since span selection is combinatorial
the relaxation may leave a duality gap; the solutions coincide exactly at
budgets attained by a support point of the concave envelope of the
utility--entropy frontier, and elsewhere the IB solution is the
utility-maximising feasible trace for the nearest such budget.)

Under the additional simplification that all spans have equal code
length, $H(Z_t\mid g)\propto|Z_t|$, the constraint becomes a cardinality
constraint, the knapsack reduces to top-$k$ selection, and
\[
\arg\max_{|Z_t|\le k}\;\sum_{s_i\in Z_t}\mathbb{E}[U(s_i)]
=
\arg\max_{|Z_t|\le k}\; I(Z_t;Y\mid g)
=
\arg\min_{C:\,|Z_t|\le k}\;\mathcal{L}_{\mathrm{IB}}(C),
\]
recovering the fixed-cardinality statement as a special case.

Therefore, under the conditional-independence (bounded-interaction)
assumption, selecting spans by expected counterfactual utility subject
to the entropy budget $H(Z_t\mid g)\le B$ is the constrained form of the
Information Bottleneck objective, with the IB trade-off coefficient
$\beta$ playing the role of the inverse Lagrange multiplier of the
budget.
\end{proof}

\begin{corollary}[Gap between utility-selected and IB-optimal traces]
\label{cor:selection-gap}
Let $\mathcal{Z}(B)$ be the budget-feasible traces, let
$Z^\star\in\arg\max_{Z\in\mathcal{Z}(B)} I(Z;Y\mid g)$ be the IB-optimal trace,
and let $\hat Z\in\arg\max_{Z\in\mathcal{Z}(B)}\sum_{s_i\in Z}\mathbb{E}[U(s_i)]$
be the trace selected by total counterfactual utility. Under
Assumption~\ref{assumption:future-coverage-invariance},
\begin{enumerate}
\item[(i)] \emph{Information gap.}\;
$0\;\le\; I(Z^\star;Y\mid g)-I(\hat Z;Y\mid g)\;\le\;2\bar\varepsilon$;
with $I(Z^\star;Y\mid g)=\Theta(K)$ the relative gap is $O\!\bigl(m(fh+\rho)/K\bigr)$.
\item[(ii)] \emph{Decision-distribution gap.}\;
Writing $L(Z):=\mathbb{E}_{H_t}\bigl[D_{\mathrm{KL}}(P(Y\mid H_t,g)\,\|\,P(Y\mid Z,g))\bigr]
= I(H_t;Y\mid g)-I(Z;Y\mid g)$ for the expected decision loss of a trace,
\[
L(\hat Z)\;\le\;L(Z^\star)+2\bar\varepsilon,
\qquad
\mathbb{E}_{H_t}\bigl[\mathrm{TV}\bigl(P(Y\mid H_t,g),P(Y\mid\hat Z,g)\bigr)\bigr]
\;\le\;\sqrt{\tfrac12\bigl(L(Z^\star)+2\bar\varepsilon\bigr)} ,
\]
and the same bounds hold for the marginal of any future action $a_{t+k}$,
since it is a function of $Y$.
\item[(iii)] \emph{Exact agreement under a margin.}\;
If the IB-optimal trace is separated from every other feasible trace by
$I(Z^\star;Y\mid g)-I(Z;Y\mid g)>2\bar\varepsilon$ for all
$Z\in\mathcal{Z}(B)\setminus\{Z^\star\}$, then $\hat Z=Z^\star$: the two
optimizations select identical spans.
\end{enumerate}
\end{corollary}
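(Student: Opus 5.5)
The whole corollary rests on one uniform approximation. By Proposition~\ref{prop:counterfactual-ib}, every feasible trace $Z\in\mathcal{Z}(B)$ satisfies $|I(Z;Y\mid g)-\Phi(Z)|\le\bar\varepsilon$, where $\Phi(Z):=\sum_{s_i\in Z}\mathbb{E}[U(s_i)]$. Here $\bar\varepsilon$ is read as a uniform tolerance over feasible traces: by Assumption~\ref{assumption:future-coverage-invariance}, each $\varepsilon_{i_j}\le\varepsilon$, so $\bar\varepsilon\le m\varepsilon$ for every trace with at most $m$ spans. Stating this uniformity explicitly is the one delicate point, since the proposition was proved for a fixed trace. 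The bound must hold simultaneously for $\hat Z$ and $Z^\star$ (and, for (iii), for every competitor $Z$). I will therefore take $\bar\varepsilon$ to be the supremum of the per-trace tolerances over $\mathcal{Z}(B)$.

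\emph{(i).} The lower bound $0\le I(Z^\star;Y\mid g)-I(\hat Z;Y\mid g)$ holds because $Z^\star$ maximises $I$ over $\mathcal{Z}(B)$ and $\hat Z$ is feasible. For the upper bound, use the standard argmax-of-a-perturbation chain:
\[
I(Z^\star;Y\mid g)\le\Phi(Z^\star)+\bar\varepsilon\le\Phi(\hat Z)+\bar\varepsilon\le I(\hat Z;Y\mid g)+2\bar\varepsilon,
\]
where the middle step uses optimality of $\hat Z$ for $\Phi$. For the relative rate, divide by $I(Z^\star;Y\mid g)=\Theta(K)$ and insert $\bar\varepsilon\le m(fh+\rho)$ from the horizon discussion after Assumption~\ref{assumption:future-coverage-invariance}.

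\emph{(ii).} First verify the identity for $L(Z)$. Since $Z$ is a function of $H_t$, the Markov chain $Y-H_t-Z$ holds. Expanding the expected KL as an expectation of log-ratios then gives $\mathbb{E}[\log P(Y\mid H_t,g)-\log P(Y\mid Z,g)]=I(H_t;Y\mid g)-I(Z;Y\mid g)$. Hence $L(\hat Z)-L(Z^\star)=I(Z^\star;Y\mid g)-I(\hat Z;Y\mid g)\le2\bar\varepsilon$ by (i). For the TV bound, apply Pinsker pointwise, $\mathrm{TV}\le\sqrt{\tfrac12 D_{\mathrm{KL}}}$, and then Jensen (concavity of $\sqrt{\cdot}$) to move the expectation inside the square root, giving $\sqrt{\tfrac12L(\hat Z)}\le\sqrt{\tfrac12(L(Z^\star)+2\bar\varepsilon)}$. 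For a future action $a_{t+k}=f(Y)$, data processing gives that both the KL and the TV between the pushforwards are no larger than those for $Y$, so the same bounds hold.

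\emph{(iii).} Argue by contradiction. If $\hat Z\neq Z^\star$, the margin hypothesis gives $I(Z^\star;Y\mid g)-I(\hat Z;Y\mid g)>2\bar\varepsilon$, which contradicts (i). Hence $\hat Z=Z^\star$. If $\arg\max\Phi$ is not unique, the same argument applies to every maximiser, so all of them equal $Z^\star$.
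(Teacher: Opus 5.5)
Your proof is correct and follows the paper's argument: the same perturbed-argmax chain $I(\hat Z;Y\mid g)\ge\Phi(\hat Z)-\bar\varepsilon\ge\Phi(Z^\star)-\bar\varepsilon\ge I(Z^\star;Y\mid g)-2\bar\varepsilon$ for (i), the expected-KL/entropy identity followed by Pinsker, Jensen and data processing for (ii), and a contradiction with (i) for (iii). Taking $\bar\varepsilon$ as the supremum of the per-trace tolerances over $\mathcal{Z}(B)$ makes precise the uniform bound $|I(Z;Y\mid g)-\Phi(Z)|\le\bar\varepsilon$ on $\mathcal{Z}(B)$, which the paper invokes without comment.
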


\begin{proof}
Let $f(Z):=I(Z;Y\mid g)$ and $u(Z):=\sum_{s_i\in Z}\mathbb{E}[U(s_i)]$, so
$|f-u|\le\bar\varepsilon$ on $\mathcal{Z}(B)$ by
Proposition~\ref{prop:counterfactual-ib}. Then
$f(\hat Z)\ge u(\hat Z)-\bar\varepsilon\ge u(Z^\star)-\bar\varepsilon
\ge f(Z^\star)-2\bar\varepsilon$, giving (i). For (ii),
$L(Z)=I(H_t;Y\mid g)-f(Z)$ is the identity of Section~3 (the expected KL
equals the entropy difference), so $L(\hat Z)-L(Z^\star)=f(Z^\star)-f(\hat Z)\le2\bar\varepsilon$;
the TV bound is Pinsker's inequality followed by Jensen, and the
action-level statement is data processing. For (iii), if $\hat Z\neq Z^\star$
then by (i) $f(\hat Z)\ge f(Z^\star)-2\bar\varepsilon$, contradicting the margin.
\end{proof}

In practice, \methodname{} does not compute the KL divergence in Equation~\ref{eq:counterfactual-utility} directly. Instead, it approximates future utility using stochastic draft-model plan sketches, where a span is scored by the fraction of sampled plans that explicitly cite it as a dependency. This citation frequency can be interpreted as a Monte Carlo estimate of the probability that the span is decision-relevant for the future. 
From the above analysis, this estimate is within the bounds connected to the expected KL-based counterfactual utility, and thus to the solution optimizing the information bottleneck objective.

\subsubsection{Utility Separation and Estimation Accuracy}\label{sec_utility_separation}

We first state a margin condition under which future-relevant and irrelevant spans are separable by a threshold. Let $S^*$ denote the set of future-relevant spans that should be preserved for decision-making.

\begin{assumption}[Utility Margin]
\label{assump:utility-margin}
There exists a threshold $\tau$ and margin $\gamma>0$ such that every future-relevant span $s_i\in S^*$ satisfies
\begin{equation*}
u_i \geq \tau+\gamma,
\label{eq:margin-positive}
\end{equation*}
and every irrelevant span $s_i\notin S^*$ satisfies
\begin{equation*}
u_i \leq \tau-\gamma.
\label{eq:margin-negative}
\end{equation*}
\end{assumption}

This assumption states that useful and non-useful spans are not arbitrarily close under the utility measure. Without such a margin, any threshold-based compression method may be unstable, since a small estimation error could flip the retention decision.

\begin{figure}[t]
    \centering
    \includegraphics[width=\linewidth]{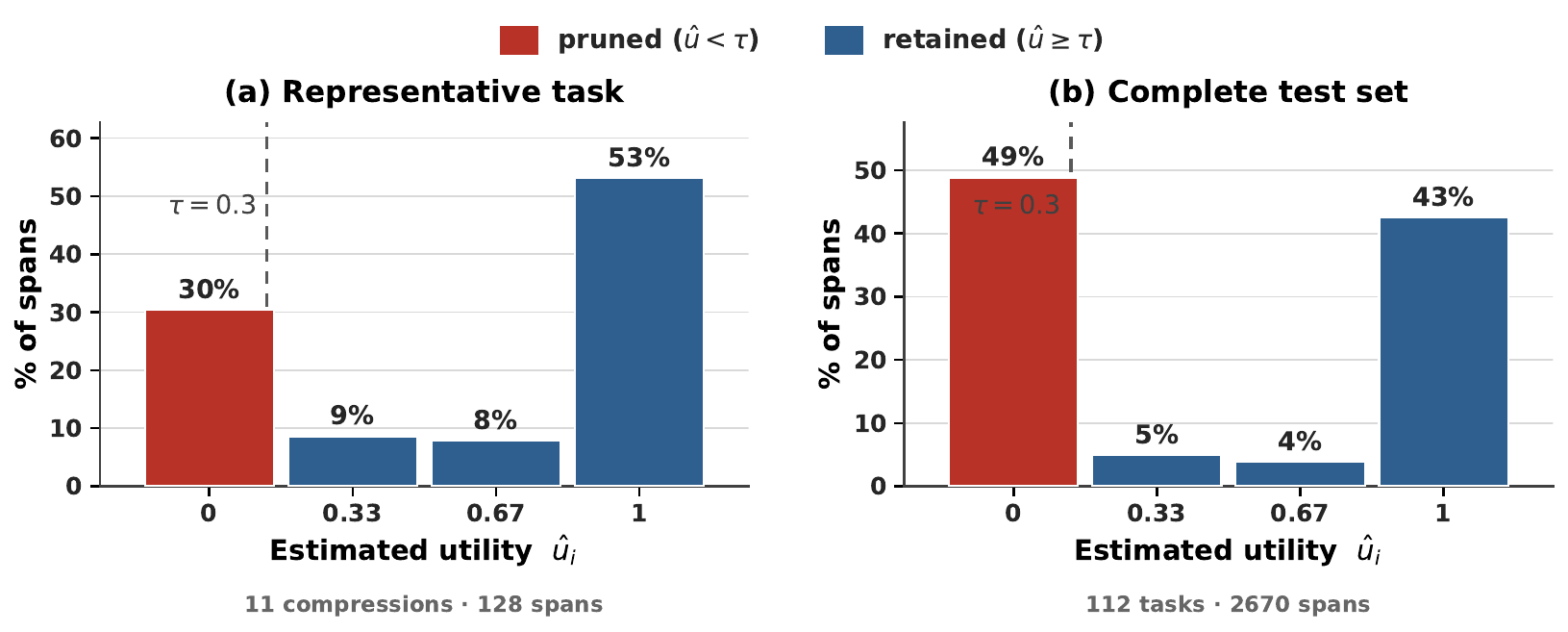}
    \caption{Empirical distribution of estimated span utilities $\hat{u}_i$ on AppWorld with \texttt{gpt-4.1} as both agent and draft model with $N{=}3$ rollouts (figures rounded off to the next digit). \textbf{(a)} A representative task (\texttt{task\_6b6ca61\_3}) and \textbf{(b)} the complete \texttt{test\_normal} set. Bars are colored by the retention threshold $\tau{=}0.3$ (red: pruned, blue: retained). Almost all probability mass concentrates at the extremes $\hat{u}_i\!\approx\!0$ and $\hat{u}_i\!\approx\!1$ ($\sim83\%$ for the representative task, $\sim92\%$ in aggregate), leaving a sparsely populated interior near $\tau$. This empirically supports the utility-margin separation posited by Assumption~\ref{assump:utility-margin}.} 
    \label{fig:utility_histogram}
\end{figure}

\paragraph{Empirical validation of the margin.} Figure~\ref{fig:utility_histogram} plots the estimated span utilities $\hat{u}_i$ produced during actual compression on AppWorld. The distribution is sharply concentrated at the two extremes: spans that no plan sketch cites ($\hat{u}_i{=}0$, irrelevant) and spans cited by every rollout ($\hat{u}_i{=}1$, decision-critical), with only $8\text{--}16\%$ of the mass in the interior between them. Very little mass falls in the immediate neighborhood of the threshold $\tau$ (under $10\%$ within $\pm 1/6$ of $\tau$ in aggregate), so the retention decision is rarely a close call. This separation is precisely the well-separated, near-bimodal structure over the true utilities $u_i$ that Assumption~\ref{assump:utility-margin} formalizes, indicating that the margin condition is not merely a convenient idealization but is approximately realized in practice. We note that, at the default $N{=}3$, the estimator $\hat{u}_i$ is discrete on $\{0,\tfrac{1}{3},\tfrac{2}{3},1\}$, so the visible gap reflects the separation of the underlying utilities rather than an artifact of high-resolution binning.

Next, we connect the empirical estimator $\hat{u}_i$ to the draft-model rollout process. For each rollout $k$, define
\begin{equation}
X_i^{(k)}
=
\mathbf{1}\!\left[s_i\in \mathrm{Dep}(p_k)\right],
\label{eq:theory-dependency-indicator}
\end{equation}
where $p_k$ is the sampled plan sketch and $\mathrm{Dep}(p_k)$ is the set of spans cited by that plan. \methodname{} estimates the dependency score as
\begin{equation}
\hat{u}_i
=
\frac{1}{N}\sum_{k=1}^N X_i^{(k)}.
\label{eq:theory-estimator}
\end{equation}

\begin{lemma}[Uniform Concentration of Utility Estimates]
\label{lem:uniform-concentration}
Assume that, for each span $s_i$, the rollout indicators $X_i^{(k)}$ are independent Bernoulli random variables with expectation $u_i$. Let $m=|H_t|$ be the number of spans. Then, for any $\gamma>0$,
\begin{equation}
\Pr\!\left(
\max_{1\leq i\leq m} |\hat{u}_i-u_i|>\gamma
\right)
\leq
2m\exp(-2N\gamma^2).
\label{eq:uniform-concentration}
\end{equation}
\end{lemma}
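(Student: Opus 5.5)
This is a two-step argument: a per-span Hoeffding bound, then a union bound over the $m$ spans. The independence assumption in the lemma is only needed \emph{within} each span, across rollouts. We need no independence across different spans, and none holds, since a single plan sketch $p_k$ determines all the $X_i^{(k)}$ jointly. The union bound is exactly the tool that avoids any cross-span assumption.

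\textbf{Step 1 (fixed span).} Fix $i$. By Eq.~\ref{eq:theory-estimator}, $\hat u_i=\frac1N\sum_{k=1}^N X_i^{(k)}$ is an average of $N$ independent variables taking values in $[0,1]$, each with mean $u_i$. Hence $\mathbb{E}[\hat u_i]=u_i$, which is the unbiasedness claimed earlier. The two-sided Hoeffding inequality for variables in $[a,b]=[0,1]$ gives
\[
\Pr\bigl(|\hat u_i-u_i|>\gamma\bigr)\le 2\exp\!\Bigl(-\tfrac{2N^2\gamma^2}{N(b-a)^2}\Bigr)=2\exp(-2N\gamma^2).
\]
This is the standard bound applied to each tail separately and then summed. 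Nothing here depends on the value of $u_i$, so the constant is uniform in $i$.

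\textbf{Step 2 (union over spans).} The event $\{\max_{1\le i\le m}|\hat u_i-u_i|>\gamma\}$ equals the union $\bigcup_{i=1}^m\{|\hat u_i-u_i|>\gamma\}$. By subadditivity of probability,
\[
\Pr\Bigl(\max_{1\le i\le m}|\hat u_i-u_i|>\gamma\Bigr)\le\sum_{i=1}^m 2\exp(-2N\gamma^2)=2m\exp(-2N\gamma^2),
\]
which is Eq.~\ref{eq:uniform-concentration}.

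There is no real obstacle. The only point needing care is the one noted at the start: the dependence of the $X_i^{(k)}$ across $i$ within a single rollout does not matter, because the union bound requires no joint structure.

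We could also note two consequences of the same argument. First, it applies verbatim to the set estimator $\hat u_S$ of Eq.~\ref{eq:set-estimator} for any fixed collection of candidate sets, with $m$ replaced by the number of sets. Second, inverting the bound shows that $N\ge\frac{1}{2\gamma^2}\log\frac{2m}{\eta}$ rollouts suffice for uniform accuracy $\gamma$ with probability at least $1-\eta$. Under Assumption~\ref{assump:utility-margin}, this uniform accuracy is what makes thresholding at $\tau$ recover $S^*$.
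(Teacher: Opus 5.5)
Your proof is correct and follows the paper's argument: a two-sided Hoeffding bound for each span's $[0,1]$-valued indicators averaged over independent rollouts, then a union bound over the $m$ spans. Your remark that no independence across spans is needed (a single plan sketch determines all $X_i^{(k)}$ jointly) is a useful clarification that the paper's sketch leaves implicit.
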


\begin{proof}[Proof sketch]
For a fixed span $s_i$, Hoeffding's inequality gives
\begin{equation*}
\Pr\!\left(|\hat{u}_i-u_i|>\gamma\right)
\leq
2\exp(-2N\gamma^2).
\end{equation*}
Applying a union bound over all $m$ spans yields Equation~\ref{eq:uniform-concentration}.
\end{proof}

Lemma~\ref{lem:uniform-concentration} shows that increasing the number of draft-model rollouts reduces the probability of any span-level utility estimate deviating from its population value. This provides a formal justification for using multiple stochastic plan sketches rather than a single deterministic plan.

\subsubsection{Recovery of Future-Relevant Spans}

\methodname{} first retains spans whose estimated utility exceeds the threshold:
\begin{equation}
S_U
=
\{s_i\in H_t:\hat{u}_i\geq \tau\}.
\label{eq:theory-su}
\end{equation}
It then adds spans rescued by defensive verification:
\begin{equation}
S_{\mathrm{keep}}
=
S_U\cup S_R,
\label{eq:theory-skeep}
\end{equation}
where $S_R$ contains spans whose deletion may cause repeated invalid actions, state inconsistency, or loss of negative constraints.

\begin{theorem}[High-Probability Recovery of Future-Relevant Spans]
\label{thm:high-prob-recovery}
Suppose Assumption~\ref{assump:utility-margin} holds and the rollout indicators satisfy the conditions of Lemma~\ref{lem:uniform-concentration}. Then, with probability at least
\begin{equation}
1-2m\exp(-2N\gamma^2),
\label{eq:recovery-probability}
\end{equation}
thresholding $\hat{u}_i$ at $\tau$ recovers all future-relevant spans in $S^*$ and excludes all irrelevant spans outside the margin. That is, $S^* \subseteq S_U$. Moreover, the final \methodname{} retained set satisfies $S^* \subseteq S_{\mathrm{keep}}$, since $S_{\mathrm{keep}}=S_U\cup S_R$.
\end{theorem}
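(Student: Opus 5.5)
The plan is to reduce everything to the high-probability event supplied by Lemma~\ref{lem:uniform-concentration} and then argue deterministically on that event. Let
\[
\mathcal{E}:=\Bigl\{\max_{1\le i\le m}|\hat u_i-u_i|\le\gamma\Bigr\}.
\]
We will invoke the lemma with the same margin $\gamma$ that appears in Assumption~\ref{assump:utility-margin}. The lemma gives $\Pr(\mathcal{E}^c)\le 2m\exp(-2N\gamma^2)$, so $\mathcal{E}$ holds with at least the probability in Equation~\ref{eq:recovery-probability}. It therefore suffices to show that, on $\mathcal{E}$, the three set inclusions and exclusions in the statement hold with certainty.

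On $\mathcal{E}$ we argue span by span, splitting into the two classes given by the margin assumption.

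\textbf{Relevant spans.} Take $s_i\in S^*$. Then $\hat u_i\ge u_i-\gamma\ge(\tau+\gamma)-\gamma=\tau$, so $s_i\in S_U$ by Equation~\ref{eq:theory-su}. This yields $S^*\subseteq S_U$.

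\textbf{Irrelevant spans.} Take $s_i\notin S^*$. Then $\hat u_i\le u_i+\gamma\le\tau$. Here there is a boundary subtlety: the retention rule uses $\hat u_i\ge\tau$, so equality $\hat u_i=\tau$ would wrongly retain the span. I would handle this either by using the strict-inequality form of $\mathcal{E}$ that the lemma actually controls (its event is $|\hat u_i-u_i|>\gamma$, so the complement is the non-strict event and this does not help directly), or, more cleanly, by reading the exclusion clause as applying to spans strictly outside the margin. Alternatively, one can apply the lemma with a slightly smaller $\gamma'<\gamma$ and let $\gamma'\uparrow\gamma$, which recovers the stated probability by continuity. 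This tie case is the only real obstacle. I would first try the $\gamma'\uparrow\gamma$ limiting argument, noting that the bound $2m\exp(-2N\gamma'^2)$ is continuous in $\gamma'$.

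The final claim is immediate from Equation~\ref{eq:theory-skeep}: since $S_U\subseteq S_U\cup S_R=S_{\mathrm{keep}}$, on $\mathcal{E}$ we get $S^*\subseteq S_U\subseteq S_{\mathrm{keep}}$. This holds regardless of what defensive verification returns, so no assumption on $S_R$ is needed. It follows that adding $S_R$ can only shrink the set of missed spans and never harms recovery; any spurious retentions it adds affect compression ratio, not the recovery guarantee.
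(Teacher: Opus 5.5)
Your proposal is correct and follows the paper's proof: condition on the uniform-concentration event from Lemma~\ref{lem:uniform-concentration}, use the margin to get $\hat u_i\ge\tau$ for $s_i\in S^*$ and $\hat u_j\le\tau$ for irrelevant spans, and note that $S_U\subseteq S_U\cup S_R=S_{\mathrm{keep}}$. You also correctly flag the tie $\hat u_j=\tau$, which the paper's sketch silently counts as an exclusion even though the retention rule is $\hat u_i\ge\tau$; your $\gamma'\uparrow\gamma$ argument closes this gap, because the good event does not depend on $\gamma'$ and so has probability at least $1-2m\exp(-2N\gamma'^2)$ for every $\gamma'<\gamma$ (the standard $\ge\gamma$ form of Hoeffding would give the same conclusion directly).
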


\begin{proof}[Proof sketch]
By Lemma~\ref{lem:uniform-concentration}, with probability at least $1-2m\exp(-2N\gamma^2)$, all spans satisfy $|\hat{u}_i-u_i|\leq\gamma$. For any future-relevant span $s_i\in S^*$, Assumption~\ref{assump:utility-margin} gives $u_i\geq\tau+\gamma$, hence
\begin{equation*}
\hat{u}_i \geq u_i-\gamma \geq \tau.
\end{equation*}
Therefore $s_i\in S_U$. For any irrelevant span $s_j\notin S^*$, Assumption~\ref{assump:utility-margin} gives $u_j\leq\tau-\gamma$, hence
\begin{equation*}
\hat{u}_j \leq u_j+\gamma \leq \tau.
\end{equation*}
Thus irrelevant spans below the margin are excluded by utility thresholding. Finally, since $S_{\mathrm{keep}}=S_U\cup S_R$, adding defensive verification cannot remove any recovered span, so $S^*\subseteq S_{\mathrm{keep}}$.
\end{proof}

This theorem gives the main recovery guarantee. Under a utility margin, the probability of recovering the future-relevant span set increases with the number of rollouts $N$ and decreases with the number of spans $m$. Defensive verification then acts as a conservative safety extension: it may increase the retained set, but it does not reduce the recovered high-utility set.

\subsubsection{From Span Recovery to Decision Preservation}

We next connect span recovery to decision preservation. Let the compressed trace be
\begin{equation}
Z_t
=
\{C(s_i):s_i\in H_t,\ C(s_i)\neq\varnothing\},
\label{eq:theory-compressed-trace}
\end{equation}
where
\begin{equation*}
C(s_i)=
\begin{cases}
s_i, & s_i\in S_{\mathrm{keep}},\\
\varnothing, & s_i\in S_{\mathrm{drop}}.
\end{cases}
\label{eq:theory-compression-map}
\end{equation*}

To state a decision-preservation result, we separate the loss introduced by dropping spans from the loss introduced 
by draft-main model mismatch.
\begin{assumption}[Bounded Residual Compression Error]
\label{assump:residual-error}
Conditioned on retaining all future-relevant spans in $S^*$, the remaining divergence between the full-history future distribution and the compressed-trace future distribution is bounded by $\delta_{\mathrm{res}}\equiv \delta_{\mathrm{mis}}$,

which captures mismatch between the draft-model dependency structure and the main-agent policy.
\end{assumption}

Assumption~\ref{assump:residual-error} reflects the fact that even if \methodname{} correctly identifies which spans should be preserved, the final compressed trace may still differ from the full history because 
the draft model may not perfectly match the main agent.

\begin{corollary}[Decision Preservation under Span Recovery]
\label{cor:decision-preservation}
Under the conditions of Theorem~\ref{thm:high-prob-recovery} and Assumption~\ref{assump:residual-error}, with probability at least
\begin{equation}
1-2m\exp(-2N\gamma^2),
\end{equation}
the compressed trace $Z_t$ approximately preserves the future decision distribution:
\begin{equation}
D\!\left(
P(Y\mid H_t,g)
\;\|\;
P(Y\mid Z_t,g)
\right)
\leq
\delta_{\mathrm{res}}.
\label{eq:decision-preservation-bound}
\end{equation}
\end{corollary}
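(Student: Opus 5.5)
The argument is a direct composition of Theorem~\ref{thm:high-prob-recovery} with Assumption~\ref{assump:residual-error}. The probabilistic content comes entirely from the recovery theorem, and the deterministic divergence bound comes entirely from the residual-error assumption.

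First, let $\mathcal{E}$ be the event $\{\max_i|\hat u_i-u_i|\le\gamma\}$. By Lemma~\ref{lem:uniform-concentration}, $\Pr(\mathcal{E})\ge 1-2m\exp(-2N\gamma^2)$.

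Second, I will invoke the proof of Theorem~\ref{thm:high-prob-recovery}. On $\mathcal{E}$, every $s_i\in S^*$ has $\hat u_i\ge u_i-\gamma\ge\tau$, so $S^*\subseteq S_U$. Since $S_{\mathrm{keep}}=S_U\cup S_R$ by Eq.~\ref{eq:theory-skeep}, this gives $S^*\subseteq S_{\mathrm{keep}}$.

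Third, I note that under the compression map of Eq.~\ref{eq:theory-compressed-trace}, every span of $S_{\mathrm{keep}}$ is reproduced verbatim in $Z_t$. Hence on $\mathcal{E}$ the compressed trace retains all of $S^*$, and $S_{\mathrm{drop}}\cap S^*=\varnothing$. This is exactly the conditioning event of Assumption~\ref{assump:residual-error}. That assumption then yields
\[
D\bigl(P(Y\mid H_t,g)\,\|\,P(Y\mid Z_t,g)\bigr)\le\delta_{\mathrm{res}}
\]
pointwise on $\mathcal{E}$.

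Finally, the inclusion $\mathcal{E}\subseteq\{D(\cdot\|\cdot)\le\delta_{\mathrm{res}}\}$ transfers the probability lower bound to the divergence bound, which is Eq.~\ref{eq:decision-preservation-bound}.

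The main obstacle is interpretive rather than technical. Assumption~\ref{assump:residual-error} must be read as a bound that holds \emph{whenever} the retained set contains $S^*$. It has to hold regardless of which extra spans are kept, whether by $S_R$ or by irrelevant spans that land within the margin. If instead it is read as a bound on one specific trace, the argument needs monotonicity: adding retained spans must not increase the divergence. That monotonicity does not follow in general for KL between two conditionals. So I would state explicitly that the assumption is uniform over all supersets of $S^*$, since the randomness of $\hat u$ makes $Z_t$ random.

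If one prefers to avoid this, there is a second route. Bound the dropped set directly via Corollary~\ref{prop:iff-ranking}: on $\mathcal{E}$, each dropped span has $u_i\le\hat u_i+\gamma<\tau+\gamma$, so the loss is at most $m_{\max}u_{S_{\mathrm{drop}}}$. The draft--main mismatch term is then absorbed into $\delta_{\mathrm{mis}}$. I would present this only as a remark, not as the main proof.
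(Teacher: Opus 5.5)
Your proposal is correct and takes the same route as the paper's proof sketch. On the concentration event of Lemma~\ref{lem:uniform-concentration}, Theorem~\ref{thm:high-prob-recovery} gives $S^*\subseteq S_{\mathrm{keep}}$, and Assumption~\ref{assump:residual-error} then bounds the divergence by $\delta_{\mathrm{res}}$; your reading that the assumption must hold uniformly over every retained superset of $S^*$ (because $Z_t$ is random) is exactly what the paper's sketch uses implicitly, and your alternative via Corollary~\ref{prop:iff-ranking} is rightly kept as a remark, since it yields $m_{\max}u_{S_{\mathrm{drop}}}$ plus a mismatch term rather than the stated $\delta_{\mathrm{res}}$.
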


\begin{proof}[Proof sketch]
By Theorem~\ref{thm:high-prob-recovery}, with high probability \methodname{} retains all spans in $S^*$. Therefore, no future-relevant span is removed. The remaining discrepancy between conditioning on $H_t$ and conditioning on $Z_t$ comes from
the draft-main mismatch. By Assumption~\ref{assump:residual-error}, these residual errors are bounded by $\delta_{\mathrm{res}}$.
\end{proof}

This corollary states that \methodname{} approximates a decision-preserving sufficient statistic of the full history: if the retained set covers the future-relevant spans, then the compressed trace preserves future behaviour up to residual compression and modeling errors.

\subsubsection{Effect of Missed Spans}\label{sec_missed_spans}

The previous result assumes that all future-relevant spans are recovered. We also state a more general bound that accounts for missed spans. Let
\begin{equation}
M
=
S^*\setminus S_{\mathrm{keep}}
\label{eq:missed-set}
\end{equation}
denote the set of future-relevant spans missed by \methodname{}.

\begin{assumption}[Subadditive Omission Loss]
\label{assump:omission-loss}
The additional future-distribution divergence caused by omitting a set of future-relevant spans $M$ is bounded by the sum of their utilities:
\begin{equation}
D\!\left(
P(Y\mid H_t,g)
\;\|\;
P(Y\mid Z_t,g)
\right)
\leq
\delta_{\mathrm{res}}
+
\sum_{s_i\in M} U(s_i).
\label{eq:subadditive-omission-loss}
\end{equation}
\end{assumption}

This assumption is a conservative way to account for missed causal information. It does not require \methodname{} to be perfect; instead, it states that the decision loss increases with the total counterfactual utility of the missed spans.

\begin{proposition}[Decision Loss with Missed Spans]
\label{prop:missed-span-loss}
If \methodname{} misses a set $M=S^*\setminus S_{\mathrm{keep}}$ of future-relevant spans, then under Assumption~\ref{assump:omission-loss},
\begin{equation}
D\!\left(
P(Y\mid H_t,g)
\;\|\;
P(Y\mid Z_t,g)
\right)
\leq
\delta_{\mathrm{res}}
+
\sum_{s_i\in M} U(s_i).
\label{eq:missed-span-bound}
\end{equation}
\end{proposition}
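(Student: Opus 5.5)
This statement follows essentially by instantiating Assumption~\ref{assump:omission-loss}, so the plan is to check that its hypotheses are met by the set $M$ defined in Eq.~\ref{eq:missed-set} and then read off the bound. First, fix the realised retained set $S_{\mathrm{keep}}=S_U\cup S_R$ produced by thresholding and defensive verification. Let $Z_t$ be the compressed trace of Eq.~\ref{eq:theory-compressed-trace}, obtained through the map $C$. Then $M=S^*\setminus S_{\mathrm{keep}}$ is, by construction, exactly the set of future-relevant spans that are omitted from $Z_t$. Every span of $S^*$ not in $M$ is retained verbatim.

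Second, I would decompose the omitted spans. The dropped set $S_{\mathrm{drop}}$ splits as $M\cup(S_{\mathrm{drop}}\setminus S^*)$. On the spans in $S_{\mathrm{drop}}\setminus S^*$ (irrelevant spans), the only residual discrepancy is the one accounted for by Assumption~\ref{assump:residual-error}, namely $\delta_{\mathrm{res}}$. This is precisely the baseline term in Assumption~\ref{assump:omission-loss}. The loss attributable to the relevant spans in $M$ is charged to $\sum_{s_i\in M}U(s_i)$ by that same assumption's subadditivity clause. Applying Assumption~\ref{assump:omission-loss} with this $M$ then yields Eq.~\ref{eq:missed-span-bound} directly.

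Third, I would record two consistency checks. When $M=\varnothing$ the bound reduces to Corollary~\ref{cor:decision-preservation}, which is the case guaranteed with probability $1-2m\exp(-2N\gamma^2)$ by Theorem~\ref{thm:high-prob-recovery}. Because $S_{\mathrm{keep}}\supseteq S_U$, adding $S_R$ can only shrink $M$. Since each $U(s_i)\ge 0$, the right-hand side is monotonically nonincreasing in $S_R$, which is the formal sense in which defensive verification helps. Optionally, Corollary~\ref{prop:iff-ranking} sharpens the sum to $m_{\max}\sum_{s_i\in M}u_i$, or to $m_{\max}u_M$ at the set level.

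The main obstacle is not the derivation, which is a one-line invocation. It is ensuring the assumption is applied to the right object. Assumption~\ref{assump:omission-loss} is stated for the actual $Z_t$, including any irrelevant spans that are also dropped. I must therefore make sure the residual term $\delta_{\mathrm{res}}$ is interpreted as covering those irrelevant omissions, rather than double-counting them or requiring them to be retained. If one wanted a derivation of the assumption itself rather than its use, I would try a chain rule for KL across sequentially removed spans combined with Assumption~\ref{assumption:future-coverage-invariance}. However, that would only give subadditivity up to the interaction error $\bar\varepsilon$, so I would not attempt it here.
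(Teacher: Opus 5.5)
Your proposal is correct and follows the paper's route: the paper gives no separate argument, since the proposition is just Assumption~\ref{assump:omission-loss} applied at $M=S^*\setminus S_{\mathrm{keep}}$ for the actual trace $Z_t$, so your decomposition of $S_{\mathrm{drop}}$ is commentary rather than a needed step. One caution on your optional aside: Corollary~\ref{prop:iff-ranking} (which also needs Assumptions~\ref{ass:dependency-directed} and~\ref{ass:impact-scaling}) only relaxes $\sum_{s_i\in M}U(s_i)$ to the weaker bound $m_{\max}\sum_{s_i\in M}u_i$ rather than sharpening it, and $m_{\max}u_M$ bounds the set-level $U(M)$, not $\sum_{s_i\in M}U(s_i)$, which can exceed $m_{\max}u_M$ when the spans in $M$ are jointly needed by the same futures.
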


Proposition~\ref{prop:missed-span-loss} clarifies the failure mode of \methodname{}: the main source of decision degradation is not compression itself, but the omission of spans with non-negligible counterfactual future utility. This directly motivates both Monte Carlo rollouts, which reduce the probability of missing high-utility spans, and defensive verification, which rescues spans that may not be frequently cited but encode important negative constraints.

\noindent \textbf{When Does Defensive Verification Help? }
We observe the benefit of defensive verification (FOCUS-D) is uneven across benchmarks, and this asymmetry is informative rather than a weakness. On AppWorld, FOCUS-D lifts accuracy substantially over the optimistic variant (56.5\% $\rightarrow$ 64.9\%), whereas on OfficeBench and 8-objective QA it trails FOCUS-O by only $\sim$1 point (78.9\% $\rightarrow$ 77.9\% and 0.386 $\rightarrow$ 0.376). This $\sim$1-point gap is within the run-to-run variance of LLM API non-determinism on $\sim$100-task splits ($\pm 1.4\%$ std., Section~\ref{sec:main_results}) and should not be read as systematic degradation. By construction, defensive verification \emph{rescues} spans that optimistic selection would otherwise drop.
Its sole cost is additional token usage from retaining a few extra spans, i.e.\ a compute-robustness trade-off. The stage is therefore most valuable precisely when the draft model is overconfident in discarding a span, or when the environment is stateful/stochastic and hard to simulate (as in AppWorld's API-driven tasks). Conversely, when the environment is largely deterministic and predictable, defensive verification may retain spans that are ultimately unneeded, adding mild context noise without accuracy benefit, which explains the flat-to-slightly-lower numbers on OfficeBench and 8-QA, which is admissible within variance due to LLM stochasticity.

\subsubsection{Empirical Validation of the Draft-Model Estimator}
\label{sec:convergence-empirical}
Because causal span relevance has no directly observable ground-truth label, conventional precision and recall cannot be computed without imposing an additional annotation-based proxy; we therefore evaluate the estimator against a leave-one-span-out counterfactual reference and report ranking and retained-set agreement. We empirically test the two predictions our theory makes about the draft-rollout estimator $\hat{u}_i$: it concentrates uniformly across spans at rate $O(1/\sqrt{N})$ (Lemma~\ref{lem:uniform-concentration}), and thresholding it recovers the correct keep-set with high probability (Theorem~\ref{thm:high-prob-recovery}). We further
contrast the draft estimator against the intractable KL counterfactual utility
(Eq.~\ref{eq:counterfactual-utility}) it is designed to approximate.

\paragraph{Protocol.} We reconstruct $30$ real mid-trajectory compression events (each with $\ge 6$ history spans) from saved OfficeBench and Appworld trajectories each. At each event we score every span two ways: (i) the \emph{draft} citation-frequency estimator used by \methodname{}, and (ii) the \emph{KL counterfactual} utility, estimated by leave-one-span-out sampling over an LLM-elicited future-action vocabulary.

\begin{figure}[t]
    \centering
    \includegraphics[width=\linewidth]{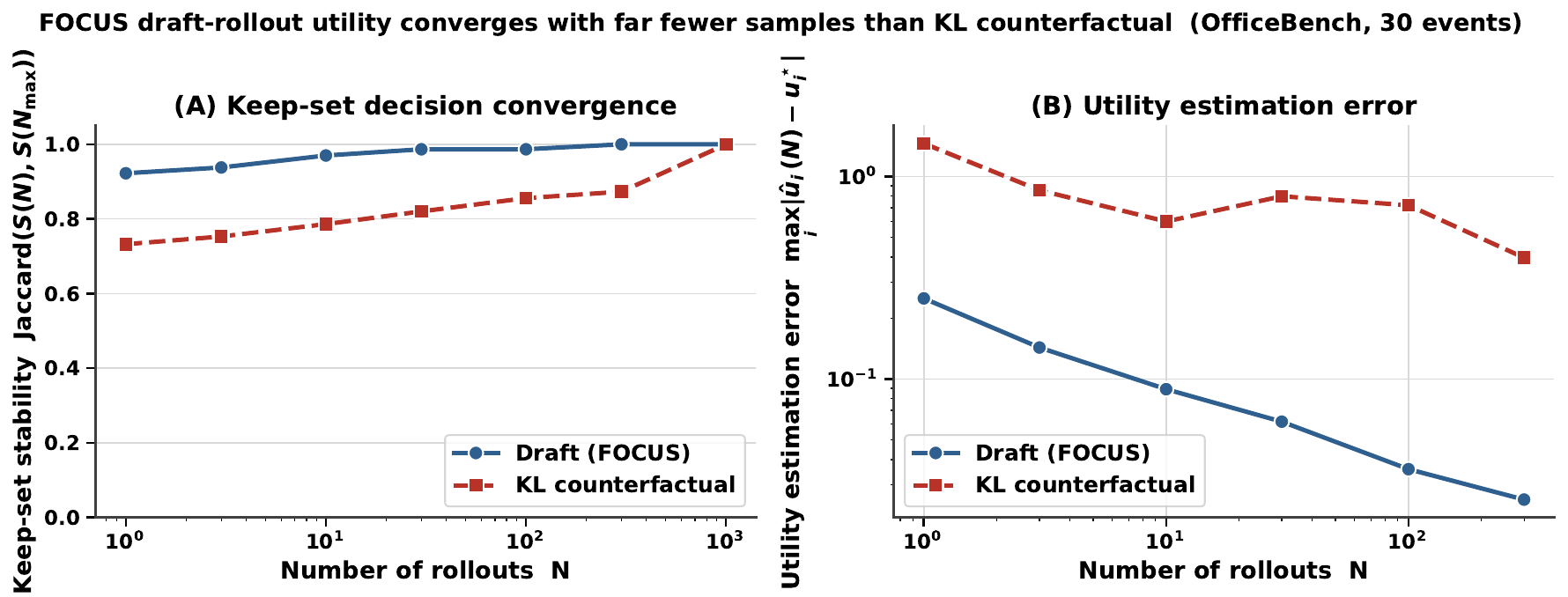}
    \caption{Sample efficiency of the draft-rollout estimator vs.\ direct KL counterfactual
estimation ($30$ OfficeBench compression events). \textbf{(A)} Keep-set stability, i.e.\
Jaccard overlap between the decision at $N$ and each estimator's own $N_{\max}$ decision.
\textbf{(B)} Worst-span utility estimation error $\max_i|\hat{u}_i(N)-u^\star_i|$ (log-log).
The draft estimator locks its decision by $N\approx10$ and its error decays as $O(1/\sqrt{N})$,
whereas KL needs $\sim\!100\times$ more rollouts and stays $7$--$15\times$ noisier. This indicates that estimating counterfactual utility using KL divergence is intractable in practice.}
    \label{fig:draft_convergence}
\end{figure}

\paragraph{Estimation error decays as $O(1/\sqrt{N})$ (Lemma~\ref{lem:uniform-concentration}).} Figure~\ref{fig:draft_convergence}(B) reports the worst-span error $\max_i|\hat{u}_i(N)-u^\star_i|$. The draft error falls monotonically
($0.25\!\to\!0.089\!\to\!0.025$ at $N{=}1,10,300$; log--log slope $\approx-0.40$),
approaching the $O(1/\sqrt{N})$ rate predicted by Lemma~\ref{lem:uniform-concentration} (ideal slope $-0.5$), whereas the KL counterfactual error stays
$7$--$15\times$ larger and barely concentrates (slope $\approx-0.17$).

\paragraph{The keep-decision locks with far fewer samples (Theorem~\ref{thm:high-prob-recovery}).}
Theorem~\ref{thm:high-prob-recovery} predicts the retained set stabilizes exponentially
fast in $N$ (rate $1-2m\exp(-2N\gamma^2)$). Figure~\ref{fig:draft_convergence}(A) confirms
this: the draft keep-set matches its converged $N_{\max}$ decision at Jaccard $\approx0.92$
after a \emph{single} rollout and locks by $N\approx10$, whereas KL needs $N\approx1000$
for comparable self-agreement. This $\sim\!100\times$ sample-efficiency gap motivates
\methodname{}'s use of draft rollouts and justifies our small default of $N{=}3$.

\paragraph{Agreement with the KL counterfactual reference.}
Fast convergence establishes that the draft estimator is low-variance, but not that it agrees with the intractable KL counterfactual utility it approximates. We therefore compare the two estimators directly at their converged $N_{\max}$ decisions on the same $30$ compression events, using two complementary measures: the \emph{overlap coefficient} between their retained span sets, and the per-event \emph{Spearman} rank correlation of
their keep/drop decisions.
On both benchmarks the two estimators select highly overlapping retained sets ($0.85$ on OfficeBench, $0.90$ on AppWorld), and their decisions are positively rank-correlated ($0.39$ and $0.48$ respectively). The agreement is stronger on AppWorld, whose longer horizons yield less noisy KL references compared to OfficeBench's shorter trajectories.

\subsubsection{Discussion}

The analysis above yields three implications. First, \methodname{} is most reliable when future-relevant spans are utility-separated from irrelevant spans. Second, the number of rollouts $N$ provides a direct statistical trade-off: larger $N$ improves span recovery but increases compression-time compute. Third, defensive verification provides a conservative correction mechanism for stateful environments, where failed actions and negative observations may be crucial even if they are not frequently cited in optimistic future plans.

Overall, the theoretical guarantee can be summarized as follows: under a utility margin and bounded estimation error, \methodname{} recovers high-utility spans with high probability; when those spans are retained, the compressed trace preserves the future decision distribution up to residual 
draft-main mismatch errors.

\begin{table*}[ht]
\centering

\footnotesize
\setlength{\tabcolsep}{1.8pt}
\renewcommand{\arraystretch}{0.9}

\begin{adjustbox}{max width=\textwidth}
\begin{tabular}{lccccccccccccc}
\toprule

& \multicolumn{4}{c}{Average (168)}
& \multicolumn{3}{c}{Easy (57)}
& \multicolumn{3}{c}{Medium (48)}
& \multicolumn{3}{c}{Hard (63)} \\

\cmidrule(lr){2-5}
\cmidrule(lr){6-8}
\cmidrule(lr){9-11}
\cmidrule(lr){12-14}

\textbf{Method}
& \textbf{Acc.$\uparrow$}
& \textbf{Steps$\downarrow$}
& \textbf{Peak$\downarrow$}
& \textbf{Dep.$\downarrow$}

& \textbf{Acc.$\uparrow$}
& \textbf{Peak$\downarrow$}
& \textbf{Dep.$\downarrow$}

& \textbf{Acc.$\uparrow$}
& \textbf{Peak$\downarrow$}
& \textbf{Dep.$\downarrow$}

& \textbf{Acc.$\uparrow$}
& \textbf{Peak$\downarrow$}
& \textbf{Dep.$\downarrow$}
\\

\midrule

\multicolumn{14}{c}{
\textbf{Agent:} \texttt{gpt-4.1-mini}
/
\textbf{Compressor:} \texttt{gpt-4.1-mini}
}
\\

\midrule

No compression
& 35.7 & 18.14 & 8.55 & 5.07
& 56.1 & 6.45 & 3.72
& 31.2 & 8.31 & 4.79
& 20.6 & 10.64 & 9.18
\\

\midrule

FIFO
& 39.3 & 30.39 & \textbf{6.18} & 5.24
& \textbf{75.4} & \textbf{4.76} & 2.66
& 35.4 & \textbf{5.33} & 4.81
& 9.5 & 8.10 & 7.91
\\

Retrieval
& 14.9 & 40.18 & 7.49 & 5.95
& 36.8 & 7.10 & 4.29
& 8.3 & 7.44 & 6.80
& 0.0 & 7.89 & 6.81
\\

LLMLingua
& 36.3 & 28.41 & 7.24 & 6.65
& 66.7 & 6.96 & 3.84
& 33.3 & 7.05 & 7.60
& 11.1 & 7.62 & 8.47
\\

Prompting
& 35.7 & 24.98 & 6.56 & 4.95
& 64.9 & 5.96 & 2.90
& 27.1 & 6.65 & 5.35
& 15.9 & 6.84 & 6.49
\\

ACON UT
& 42.3 & 22.46 & 6.51 & 5.48
& 64.9 & 5.87 & 2.62
& 37.5 & 7.18 & 5.22
& \textbf{25.4} & 7.18 & 8.25
\\

ACON UTCO
& 32.7 & 24.27 & 6.99 & 4.97
& 57.9 & 7.50 & 2.77
& 33.3 & 8.45 & 4.99
& 9.5 & \textbf{6.95} & 6.97
\\

\rowcolor{blue!10}
FOCUS
& \textbf{44.0} & \textbf{17.90} & 6.71 & \textbf{3.81}
& 66.7 & 6.17 & \textbf{2.46}
& \textbf{41.7} & 6.24 & \textbf{2.82}
& \textbf{25.4} & 7.56 & \textbf{5.78}
\\

\bottomrule
\end{tabular}
\end{adjustbox}

\caption{Detailed results on AppWorld across all difficult categories using \texttt{gpt-4.1-mini} as both the agent and compressor. \methodname{} improves task success while reducing context usage and dependency. We report the defensive configuration of \methodname{} above.}
\label{tab:appworld_gpt-4.1_mini}
\end{table*}
\begin{table*}[ht]
\centering

\footnotesize
\setlength{\tabcolsep}{1.8pt}
\renewcommand{\arraystretch}{0.9}

\begin{adjustbox}{max width=\textwidth}
\begin{tabular}{lccccccccccccc}
\toprule

\textbf{Method}
& \multicolumn{4}{c}{Average}
& \multicolumn{3}{c}{Level 1 (1-app)}
& \multicolumn{3}{c}{Level 2 (2-app)}
& \multicolumn{3}{c}{Level 3 (3-app)} \\

\cmidrule(lr){2-5}
\cmidrule(lr){6-8}
\cmidrule(lr){9-11}
\cmidrule(lr){12-14}

& \textbf{Acc $\uparrow$}
& \textbf{Steps $\downarrow$}
& \textbf{Peak $\downarrow$}
& \textbf{Dep $\downarrow$}

& \textbf{Acc $\uparrow$}
& \textbf{Peak $\downarrow$}
& \textbf{Dep $\downarrow$}

& \textbf{Acc $\uparrow$}
& \textbf{Peak $\downarrow$}
& \textbf{Dep $\downarrow$}

& \textbf{Acc $\uparrow$}
& \textbf{Peak $\downarrow$}
& \textbf{Dep $\downarrow$}
\\

\midrule

\multicolumn{14}{c}{
\textbf{Agent:} \texttt{gpt-4.1-mini}
/
\textbf{Compressor:} \texttt{gpt-4.1-mini}
} \\

\midrule

No Compression
& 72.6 & 11.96 & 7.36 & 3.92
& \textbf{88.1} & 6.66 & 4.29
& 68.2 & 4.97 & 1.01
& 54.8 & 9.02 & 5.40 \\

\midrule

FIFO
& 65.3 & 10.91 & 4.03 & 1.46
& 83.3 & 4.10 & 0.78
& 59.1 & 3.69 & 0.96
& 45.2 & \textbf{4.19} & 2.03 \\

Retrieval
& 67.4 & 14.46 & 4.55 & 2.74
& 85.7 & 5.85 & 5.86
& 59.1 & \textbf{3.47} & 0.87
& 48.4 & 4.59 & 2.45 \\

LLMLingua
& 67.4 & 11.59 & 4.90 & 2.18
& 87.2 & 4.31 & 3.87
& 59.1 & 4.58 & 0.92
& 48.4 & 5.34 & 2.17 \\

Prompting
& 71.6 & 11.78 & 4.93 & 3.10
& 85.7 & 4.73 & 4.75
& 72.7 & 4.40 & \textbf{0.86}
& 51.6 & 5.32 & 3.06 \\

ACON
& \textbf{73.7} & 12.41 & 4.82 & 1.96
& \textbf{88.1} & 4.12 & \textbf{0.83}
& 68.2 & 4.39 & \textbf{0.86}
& 58.1 & 5.37 & 3.07 \\

\rowcolor{blue!10}

\rowcolor{blue!10}
FOCUS
& \textbf{73.7} & \textbf{9.70} & \textbf{4.00} & \textbf{1.50}
& 76.2 & \textbf{3.61} & 1.54
& \textbf{81.8} & 5.38 & 0.93
& \textbf{64.5} & 5.42 & \textbf{1.70} \\

\bottomrule
\end{tabular}
\end{adjustbox}

\caption{Results on OfficeBench using \texttt{gpt-4.1-mini} as both the agent and compressor. \methodname{} reliably drives up task accuracy while maintaining the peak token usage and lowest overall context dependency along with FIFO. We report the defensive configuration of \methodname{} above.}

\label{tab:officebench_mini}
\end{table*}
\begin{table*}[h]
\centering

\footnotesize
\setlength{\tabcolsep}{1.8pt}
\renewcommand{\arraystretch}{0.9}

\begin{adjustbox}{max width=\textwidth}
\begin{tabular}{lccccccccccccc}
\toprule

\textbf{Method}
& \multicolumn{4}{c}{Average}
& \multicolumn{3}{c}{Easy}
& \multicolumn{3}{c}{Medium}
& \multicolumn{3}{c}{Hard} \\

\cmidrule(lr){2-5}
\cmidrule(lr){6-8}
\cmidrule(lr){9-11}
\cmidrule(lr){12-14}

& \textbf{Acc $\uparrow$}
& \textbf{Steps $\downarrow$}
& \textbf{Peak $\downarrow$}
& \textbf{Dep $\downarrow$}

& \textbf{Acc $\uparrow$}
& \textbf{Peak $\downarrow$}
& \textbf{Dep $\downarrow$}

& \textbf{Acc $\uparrow$}
& \textbf{Peak $\downarrow$}
& \textbf{Dep $\downarrow$}

& \textbf{Acc $\uparrow$}
& \textbf{Peak $\downarrow$}
& \textbf{Dep $\downarrow$}
\\

\midrule

\multicolumn{14}{c}{
\textbf{Agent:} \texttt{gpt-4.1}
}
\\

\midrule

Prompting (\texttt{gpt-4.1-mini})
& 39.3 & 23.6 & 7.03 & 5.19
& 64.9 & 6.64 & 3.17
& 35.4 & 7.63 & 5.42
& 19.1 & 6.93 & 6.84 \\

ACON (\texttt{gpt-4.1-mini})
& 47.6 & 21.5 & 7.25 & 5.24
& 75.4 & 6.75 & 2.84
& 35.4 & 7.25 & 5.36
& 31.8 & 7.70 & 7.32 \\


ACON (\texttt{Qwen3-14B})
& 50.0 & 21.7 & 6.83 & 4.80
& 79.0 & \textbf{6.42} & 2.54
& 50.0 & 6.87 & 4.89
& 23.8 & 7.17 & 6.79 \\

ACON (\texttt{Qwen3-8B})
& 47.0 & 21.6 & 6.98 & 4.76
& 71.9 & 6.64 & 2.93
& 37.5 & 7.24 & 4.67
& 31.8 & 7.09 & 6.48 \\

ACON (\texttt{Phi-4})
& 44.6 & 21.2 & 7.24 & 4.76
& 68.4 & 7.33 & 2.75
& 39.6 & 7.12 & 4.16
& 27.0 & 7.26 & 7.04 \\

\rowcolor{blue!10}
\methodname{} (\texttt{gpt-4.1-mini})
& 58.3 & 18.0 & 6.66 & 2.45
& 73.7 & 6.79 & \textbf{2.16}
& 60.4 & 6.17 & \textbf{2.13}
& 42.9 & \textbf{6.92} & \textbf{2.96} \\

\rowcolor{blue!10}
\methodname{} (\texttt{Qwen3-14B})
& 61.3 & \textbf{18.2} & 7.26 & 3.00
& 78.9 & 6.70 & 2.30
& 58.3 & 6.95 & 2.60
& 47.6 & 8.01 & 3.91 \\

\rowcolor{blue!10}
\methodname{} (\texttt{Qwen3-8B})
& \textbf{65.5} & 18.4 & 7.44 & 3.00
& 80.7 & 7.50 & 2.31
& \textbf{66.7} & 6.66 & 2.42
& \textbf{50.8} & 7.98 & 4.04 \\

\rowcolor{blue!10}
\methodname{} (\texttt{Phi-4})
&61.3 &	18.6 & \textbf{6.70} & \textbf{2.62}
&\textbf{84.2}&	6.70	&2.46	
&58.3	&\textbf{6.04}	&\textbf{2.21}
&42.9	&7.20	&3.07 \\
\bottomrule
\end{tabular}
\end{adjustbox}

\caption{Performance on AppWorld using \texttt{gpt-4.1} as the main agent paired with various open-weight models as compressor. We report the defensive configuration. \methodname{} consistently outperforms the baselines increasing average task success (up to 65.5\%) while maintaining minimized context dependencies across all difficulty splits.}

\label{tab:appworld_open_models}
\end{table*}
\begin{table}[t]
\centering

\footnotesize
\setlength{\tabcolsep}{1.5pt}
\renewcommand{\arraystretch}{0.9}

\begin{adjustbox}{max width=\columnwidth}
\begin{tabular}{lcccc}
\toprule
\textbf{Method} & \textbf{Acc $\uparrow$} & \textbf{Steps $\downarrow$} & \textbf{Peak $\downarrow$} & \textbf{Dep $\downarrow$} \\
\midrule

\multicolumn{5}{c}{
\textbf{Agent:} \texttt{gpt-4.1} /
\textbf{Comp:} \texttt{gpt-4.1}
} \\
\midrule

No Compression & 34.8 & \textbf{8.9} & 16.79 & 3.275 \\
\rowcolor{blue!10}
\methodname{} & \textbf{39.3} & 9.1 & \textbf{12.21} & \textbf{2.198} \\
\bottomrule
\end{tabular}
\end{adjustbox}
\caption{Evaluation results on the WebVoyager subset ($135$ tasks, text-only, \texttt{gpt-4.1} backbone). \methodname{} improves accuracy by $+4.5$ points while reducing mean peak tokens by $27\%$ and dependency by $33\%$.}
\label{tab:webvoyager}
\end{table}
\begin{table}[t]
\centering

\footnotesize
\setlength{\tabcolsep}{1.5pt}
\renewcommand{\arraystretch}{0.9}

\begin{adjustbox}{max width=\columnwidth}
\begin{tabular}{lcccccccc}
\toprule
& \multicolumn{4}{c}{\textbf{Retail} ($114$)} & \multicolumn{4}{c}{\textbf{Airline} ($50$)} \\
\cmidrule(lr){2-5} \cmidrule(lr){6-9}
\textbf{Method} & \textbf{Acc $\uparrow$} & \textbf{Steps $\downarrow$} & \textbf{Peak $\downarrow$} & \textbf{Dep $\downarrow$}
              & \textbf{Acc $\uparrow$} & \textbf{Steps $\downarrow$} & \textbf{Peak $\downarrow$} & \textbf{Dep $\downarrow$} \\
\midrule

\multicolumn{9}{c}{
\textbf{Agent:} \texttt{gpt-4.1} /
\textbf{Comp:} \texttt{gpt-4.1}
} \\
\midrule

No Compression & 67.5 & 12.7 & 3.44 & 0.834 & 44.0 & 10.5 & 2.68 & 0.775 \\
\rowcolor{blue!10}
\methodname{} & \textbf{74.6} & \textbf{11.4} & \textbf{3.25} & 0.835 & \textbf{46.0} & \textbf{9.9} & \textbf{2.64} & \textbf{0.774} \\
\bottomrule
\end{tabular}
\end{adjustbox}
\caption{Evaluation results on $\tau^2$-bench. \methodname{} improves accuracy by $+7.1$ points on retail and $+2.0$ on airline. Peak tokens reported in $10^3$ tokens; dependency in $10^6$.}
\label{tab:tau2}
\end{table}

\subsection{Related Work}

\paragraph{Memory-augmented agents.}
A complementary line of work manages long-horizon context through external memory modules rather than by compressing the working trajectory in place. MemGPT~\citep{packer2023memgpt} draws on operating-system virtual memory: it
maintains a tiered hierarchy and lets the agent page information between an in-context main memory and an external store via explicit function calls, retrieving content on demand. Mem0~\citep{chhikara2025mem0} extracts, consolidates,
and retrieves salient facts from a conversation into an external (optionally graph-structured) memory, and is evaluated primarily on long multi-session
dialogue~\citep{maharana2024evaluating}. These systems share our high-level goal of efficient context
management, but differ from \methodname{} along three axes. \emph{(i) Mechanism:}
they add an external store and a retrieval step, reintroducing an indexing and relevance-matching component, whereas \methodname{} performs no retrieval and
introduces no external state, it selects, in place, which thought--action--observation
spans of the live trajectory to retain. \emph{(ii) Retention signal:} their
salience/retrieval signal is typically semantic relevance to a query, well suited
to conversational recall; \methodname{} instead scores each span by its
\emph{forward-looking causal utility} for the agent's future decisions, which is
better matched to tool-using, state-dependent agentic tasks where an
observation's value lies in its effect on subsequent actions rather than its
topical similarity. \emph{(iii) Setting:} MemGPT and Mem0 target long-form
question answering and multi-session chat, while \methodname{} targets
interactive multi-application agents (AppWorld, OfficeBench) where the history is
a growing sequence of executions and environment responses. The two directions
are largely orthogonal and could be combined: an external memory could archive
spans that \methodname{} prunes, allowing later recall. We leave such a hybrid to
future work.

\paragraph{Comparison with Recent Agent Context Compression Methods.}

Recent agent context compression methods such as ACON~\citep{kang2025acon} and PAACE~\citep{yuksel2025paace} rely on offline optimization (guideline refinement or evolutionary prompt search) followed by distillation into
fine-tuned student models, which fixes their compression policy to a training distribution. In contrast, FOCUS is entirely training-free and adapts per trajectory at test time, selecting thought--action--observation spans verbatim
based on self-generated counterfactual utility rather than a learned or externally supplied plan. These distinctions make FOCUS compressor-agnostic and plan-free, applying out-of-the-box to any backbone or domain. 
We summarize the key differences between the methods across training regime, context operation, retention signal, external-plan dependence, and compressor choice in Table~\ref{tab:focus_comparison}.

\begin{table*}[t]
\centering
\footnotesize
\setlength{\tabcolsep}{6pt}
\renewcommand{\arraystretch}{1.15}

\begin{tabularx}{\textwidth}{>{\bfseries}p{0.18\textwidth}XXX}
\toprule
\rowcolor{gray!12}
\textbf{Axis} &
\centering\textbf{ACON} &
\centering\textbf{PAACE} &
\centering\arraybackslash\textbf{FOCUS (Ours)}\\
\midrule

Training regime &
Offline guideline refinement via failure analysis; distilled into student models &
Offline evolutionary prompt search; distilled into SLM using large corpus &
Fully training-free; adapts online for every trajectory \\

\rowcolor{gray!5}
Context operation &
Generative rewriting / summarization of observations and history &
Learned rewriting and summarization &
Verbatim selection of thought--action--observation spans \\

Retention signal &
Optimized natural-language guideline &
Next-$k$-task relevance from an externally supplied plan &
Self-generated counterfactual utility via draft-model rollouts \\

\rowcolor{gray!5}
External plan &
Not required &
Required (benchmark decomposition or separate planner) &
Not required \\

\rowcolor{gray!5}
Compressor &
Distilled student &
Distilled student &
Any off-the-shelf small model (gpt-4.1-mini, Qwen3-8B/14B, Phi-4) \\

\bottomrule
\end{tabularx}

\caption{Comparison of ACON, PAACE, and \methodname{} across key design dimensions. Unlike prior methods, \methodname{} is training-free, compressor-agnostic, and plan-free.}
\label{tab:focus_comparison}
\end{table*}

\subsection{Additional Results on WebVoyager and $\tau^2-$Bench}
\label{app:webvoyager}
To further validate the generality of \methodname{} beyond long-horizon tool-use and QA benchmarks, we conduct additional experiments on two additional benchmarks. \textbf{WebVoyager}~\citep{he2024webvoyager} subset ($135$ tasks, text-only setting) with \texttt{gpt-4.1} as the backbone for both the main agent and the compressor. As shown in Table~\ref{tab:webvoyager}, \methodname{} improves accuracy by $+4.5$ points over the no-compression baseline while simultaneously reducing mean peak tokens by $27\%$ and cumulative dependency by $33\%$. These results confirm that utility-driven span selection remains effective in interactive web-agent settings, where observations (rendered page text, DOM snippets) are long and highly redundant. On $\tau^2-$Bench ~\citep{barres2025tau}, a dual-control conversational benchmark with diffuse, cross-turn dependencies \methodname{} improves task success by $+7.1$ (Table~\ref{tab:tau2}) points on retail and $+2.0$ on airline while maintaining context metrics. This shows that decision-preserving compression not only reduces context cost but can actively improve agent reliability in long, loosely-structured dialogues, where pruning distractor spans sharpens the model's focus on the constraints that determine success.

\subsection{Latency Breakdown}
Table \ref{tab:latency_appendix} provides the latency, token, and cost breakdown for \methodname{} on OfficeBench.
\begin{table}[t]
\centering
\footnotesize
\setlength{\tabcolsep}{4pt}
\renewcommand{\arraystretch}{0.95}
\begin{adjustbox}{max width=\columnwidth}
\begin{tabular}{lccc}
\toprule
\textbf{Metric} & \textbf{No Comp.} & \textbf{FOCUS (seq.)} & \textbf{FOCUS (par.)} \\
\midrule
Total wall clock (s)          & 4480  & 3817  & 3466 \\
Latency per task (s)          & 47.2  & 40.2  & \textbf{36.5} \\
\midrule
Agent input tokens (M)        & 4.364 & 2.584 & 2.671 \\
Agent output tokens (M)       & 0.092 & 0.075 & 0.077 \\
Agent requests                & 1014  & 859   & 872 \\
Draft input tokens (M)        & 0.000 & 0.375 & 0.398 \\
Draft output tokens (M)       & 0.000 & 0.024 & 0.026 \\
Draft requests                & 0     & 249   & 267 \\
\midrule
Agent cost (\$)               & 9.47  & 5.77  & 5.96 \\
Draft cost (\$)               & 0.00  & 0.19  & 0.20 \\
\textbf{Total cost (\$)}      & 9.47  & \textbf{5.96} & 6.16 \\
\midrule
Compression events            & 0     & 83    & 89 \\
Mean compression latency (ms) & --    & 5286  & \textbf{2145} \\
\bottomrule
\end{tabular}
\end{adjustbox}
\caption{Full latency, token, and cost breakdown for \methodname{} on OfficeBench ($95$ subtasks, \texttt{gpt-4.1} agent, \texttt{gpt-4.1-mini} draft, $N{=}3$). The draft model adds only ${\sim}3\%$ to total cost, while parallelizing the $N$ rollouts cuts per-event compression latency by ${\sim}2.5\times$ at essentially identical token cost.}
\label{tab:latency_appendix}
\end{table}

\subsection{Token Statistics}
Table \ref{tab:appworld_token_stats} presents the token consumption and total API cost on AppWorld.
\begin{table}[t]
\centering
\footnotesize
\setlength{\tabcolsep}{5pt}
\renewcommand{\arraystretch}{1.05}
\begin{adjustbox}{max width=\columnwidth}
\begin{tabular}{lcccccc}
\toprule
\multirow{2}{*}{\textbf{Method}}
& \multicolumn{2}{c}{\textbf{Agent tokens (M)}}
& \multicolumn{2}{c}{\textbf{Draft tokens (M)}}
& \multirow{2}{*}{\textbf{Total tok.\ (M)}}
& \multirow{2}{*}{\textbf{Cost (\$)}} \\
\cmidrule(lr){2-3}\cmidrule(lr){4-5}
& \textbf{In} & \textbf{Out} & \textbf{In} & \textbf{Out} & & \\
\midrule
No Compression            & 19.53 & 0.27 & --   & --   & 19.80 & 41.18 \\
\methodname{} (\texttt{gpt-4.1} draft)      & 16.50 & 0.25 & 2.23 & 0.30 & 19.28 & 41.92 \\
\rowcolor{blue!10}
\methodname{} (\texttt{gpt-4.1-mini} draft) & 17.27 & 0.25 & 2.39 & 0.20 & 20.11 & \textbf{37.82} \\
\bottomrule
\end{tabular}
\end{adjustbox}
\caption{Token consumption and total API cost on AppWorld ($168$ tasks, \texttt{gpt-4.1} main agent). \methodname{} reduces the \emph{agent}'s own input tokens (the dominant cost driver) from $19.53$M to $16.50$--$17.27$M by clearing unrequired context; with a small \texttt{gpt-4.1-mini} draft the added draft tokens are cheap enough that total API cost falls below the no-compression baseline ($41.18\rightarrow37.82$). Costs use \$2.00/\$8.00 per $10^6$ input/output tokens for \texttt{gpt-4.1} and \$0.40/\$1.60 for \texttt{gpt-4.1-mini}.}
\label{tab:appworld_token_stats}
\end{table}

\subsection{Benchmarks}
\label{app:benchmarks}
We evaluate our method on five diverse agentic benchmarks spanning API, tool use, question-answering, web interaction and multi turn dialogue. We adopt the dataset split settings used by ACON \cite{kang2025acon} for OfficeBench, AppWorld and 8-QA. Because \methodname{} operates as a training-free, test-time compression framework, we rely exclusively on the test splits for evaluation and do not utilize the training splits. We also furnish relevant statistics (related to Steps, Context, and Tokens) about the benchmarks in Table \ref{tab:long_horizon_stats}.

\paragraph{AppWorld} \citep{trivedi2024appworld} serves as our primary evaluation environment. It provides a high-fidelity execution simulation bridging nine everyday applications (e.g., Gmail, Spotify, Venmo) via 457 APIs, populated with realistic simulated users, explicitly testing the limits of long-horizon productivity agent reasoning. We report our metrics solely on the 168 tasks comprising the \texttt{test\_normal} split. 

\paragraph{OfficeBench}\citep{wang2024officebench} assesses office automation capabilities across applications such as Word, Excel, Email, and Calendar. It categorizes task difficulty intuitively by the number of concurrent applications an agent must coordinate (1-app, 2-app, or 3-app variants). Following the ACON configuration, we report metrics solely on the test split.

\paragraph{8-Objective QA}\citep{zhou2025mem1,kwiatkowski2019natural} adapts standard multi-hop question-answering into a deep-research scenario. Rather than aggregating evidence for a single answer, the agent is presented with eight distinct questions simultaneously and is burdened with constructing a unified, correct final response covering all eight queries. Following the ACON settings (with questions drawn from NaturalQuestions \cite{kwiatkowski2019natural}), the benchmark provides 100 train and 100 test tasks. As with the other environments, we evaluate \methodname{} exclusively on the 100 test tasks.

\paragraph{WebVoyager} \citep{he2024webvoyager} is an end-to-end web-agent benchmark in which an agent completes user instructions by interacting with real-world websites, comprising tasks compiled from $15$ popular real-world websites (e.g., Amazon, Apple, GitHub, Google Maps, arXiv). We adopt the \emph{text-only} setting, in which observations are provided as the textual rendering of the page rather than screenshots, making it a demanding long-horizon environment where page-derived observations are lengthy and highly redundant across steps.

\paragraph{$\tau^2$-bench} \citep{barres2025tau} evaluates conversational agents in a dual-control environment. In contrast to single-control settings where the user is a passive information provider, $\tau^2$-bench models customer-support domains in which both the agent and a simulated user can act on a shared, dynamic state, testing both an agent's reasoning and its ability to communicate with and guide the user through actions on that state. Tasks are produced by a compositional generator that programmatically assembles diverse, verifiable tasks from atomic components, and the environment is coupled with a tool-constrained user simulator; success is measured against verifiable target world states.

\begin{table}[t]
\centering
\footnotesize
\setlength{\tabcolsep}{3pt}
\renewcommand{\arraystretch}{0.9}

\begin{adjustbox}{max width=\columnwidth}
\begin{tabular}{llcccc}
\toprule
\textbf{Benchmark} & \textbf{Metric} & \textbf{Mean} & \textbf{Median} & \textbf{p90} & \textbf{Max} \\
\midrule
\multirow{4}{*}{\shortstack[l]{\textbf{AppWorld}\\{}}}
 & Steps             & 16.9    & 15     & 27      & 41 \\
 & Peak Context      & 10{,}244 & 9{,}278 & 15{,}979 & 27{,}791 \\
 & Cumulative Tokens & 114{,}482 & 87{,}934 & 225{,}490 & 452{,}396 \\
 & Dependency        & 6.36M   & 4.47M  & 12.21M  & 33.22M \\
\midrule
\multirow{4}{*}{\shortstack[l]{\textbf{OfficeBench}\\{}}}
 & Steps             & 10.7    & 9      & 18      & 46 \\
 & Peak Context      & 5{,}409  & 4{,}359 & 9{,}800  & 20{,}749 \\
 & Cumulative Tokens & 38{,}798 & 20{,}023 & 87{,}972 & 464{,}910 \\
 & Dependency        & 2.33M   & 0.94M  & 6.13M   & 31.01M \\
\midrule
\multirow{4}{*}{\shortstack[l]{\textbf{WebVoyager}\\{}}}
 & Steps             & 8.9     & 9      & 15      & 15 \\
 & Peak Context      & 16{,}753 & 11{,}625 & 46{,}072 & 86{,}895 \\
 & Cumulative Tokens & 85{,}207 & 58{,}073 & 197{,}600 & 462{,}356 \\
 & Dependency        & 3.27M   & 2.22M  & 8.57M   & 16.72M \\
\bottomrule
\end{tabular}
\end{adjustbox}

\caption{Trajectory statistics of the \emph{uncompressed} \texttt{gpt-4.1} baseline across three agentic benchmarks. Peak context and cumulative token load grow to tens/hundreds of thousands of tokens with heavy right tails (p90/Max), and dependency reaches tens of millions, quantifying the long-horizon nature that motivates decision-preserving compression. Token counts use \texttt{cl100k\_base}.}
\label{tab:long_horizon_stats}
\end{table}

\subsection{Evaluation Metrics}
\label{app:metrics}

Following ACON~\cite{kang2025acon}, we report following metrics to jointly assess task performance and context efficiency:

\paragraph{Steps.} The average number of agent interaction steps (action--observation exchanges) per task. Fewer steps indicate more efficient task completion.

\paragraph{Peak Tokens.} The maximum number of input tokens observed in any single generation step throughout the agent's trajectory, excluding the static system prompt. Formally, letting $n_i^{(t)}$ denote the input token count (excluding system) at step $t$:
\[
\text{Peak} = \max_{t \in [T]} \; n_i^{(t)}.
\]
This metric serves as a proxy for inference-time memory requirements and reflects the worst-case context length the model must process. We report values in units of $10^3$ tokens.

\paragraph{Dependency (Dep).} The cumulative computational cost incurred by action generation across the full trajectory. At each step $t$, given $n_i^{(t)}$ input tokens (excluding system) and $n_o^{(t)}$ output tokens, dependency is computed as:
\[
\text{Dep} = \sum_{t \in [T]} \frac{(n_i^{(t)} + 2\,n_o^{(t)}) \times n_o^{(t)}}{2}.
\]
We report values in units of $10^6$.

\subsection{Implementation Details}
\label{app:implementation_details}

We outline the primary implementation configurations and hyperparameters used for evaluating \methodname{} across our selected benchmarks.

\paragraph{Inference Details.}
All experiments invoking \texttt{gpt-4.1} and \texttt{gpt-4.1-mini} models \citep{openai2025gpt41} are executed via Azure OpenAI endpoints. For the main agent generation, we set temperature $0.0$ with seed $42$. Conversely, when generating the Monte Carlo rollouts via the draft model, we set the temperature to $0.7$ to encourage a diverse set of stochastic plan sketches. We use \texttt{tiktoken} (cl100k\_base) for token counting. For open-weight draft model experiments, we use HuggingFace \texttt{transformers} with \texttt{Qwen3-8B/14B and Phi}.

\paragraph{Hyperparameters.}
The default \methodname{} configuration relies on $N=3$ draft-model plan rollouts per compression step. The context budget threshold ($\delta$) which dictates when the compression mechanism activates is configured according to the complexity and average sequence length of the tasks in each benchmark. Specifically, we set $\delta = 4096$ tokens for AppWorld tasks due to their long horizons and API response payloads. For OfficeBench and 8-Objective QA, we adopt a threshold of $\delta = 2048$ tokens.

\paragraph{API Cost Estimation.}
For the cost analysis across different model families (as highlighted in Section~\ref{sec:draft_models}), we calculate exact API pricing using standard rates. For the frontier models, the costs per million tokens are formulated as: \texttt{gpt-4.1}: \$2.00 / 1M input tokens, \$8.00 / 1M output tokens. \texttt{gpt-4.1-mini}: \$0.40 / 1M input tokens, \$1.60 / 1M output tokens. For the open-weight draft models (e.g., \texttt{Qwen3-14B}, \texttt{Phi-4}), pricing estimates are standardized using the competitive aggregate rates sourced from \textsl{OpenRouter} (\url{https://openrouter.ai/}).

\paragraph{Draft Model Prompt.}\label{draft_model_prompt}
For completeness we reproduce the exact prompt used to elicit plan sketches and dependency citations. Each compression step issues $N$ stochastic draft calls with the following system prompt (defensive / dual-objective variant):

\begin{tcolorbox}[promptbox]
\ttfamily\small

You are a planning assistant performing Dual-Objective Defensive Drafting. Given a partially completed task and the executor's workspace trace, your goal is twofold:

1. OPTIMISTIC PLANNING: Generate a high-level plan sketch for completion. For each step, cite the exact historical spans you depend on.

Format: Step: <description> | Depends on: [s\_X, s\_Y]

2. PESSIMISTIC VERIFICATION: Review all remaining, un-cited spans in the trace. If discarding an un-cited span would cause the executing agent to blindly repeat a mistake or lose causal state, you MUST rescue it.

Format: Rescued Spans: [s\_A, s\_B] | Reason: <risk if deleted>
\end{tcolorbox}

\paragraph{Rollout Example.}\label{rollout_example}
We provide a draft rollout example on AppWorld with \texttt{gpt-4.1} as both agent and draft generated at a compression point on a specific AppWorld task. The user message provides the original task and the working trace, where each historical span $s_i$ is rendered compactly as

\texttt{[s\_i] | Thought: \ldots | Code: \ldots | Observation:}

\begin{tcolorbox}[promptbox]
\ttfamily\small

OPTIMISTIC PLANNING

Step 1: Identify the correct work schedule note for today and fetch its content to determine available work time. | Depends on: [s\_18]

Step 2: Calculate the available time for tasks by analyzing today's work schedule and leftover tasks. | Depends on: [s\_11, s\_18]

Step 3: Review incomplete tasks in "Today's Goal" and estimate how many can be completed. | Depends on: [s\_11, s\_18]

Step 4: Select the maximum number of incomplete Inbox tasks that fit the schedule. | Depends on: [s\_14, s\_18]

Step 5: Move selected tasks from Inbox to "Today's Goal" via create\_task, keeping details identical. | Depends on: [s\_10, s\_14]

PESSIMISTIC VERIFICATION

Rescued Spans: [s\_6, s\_7, s\_10, s\_17] | Reason: These contain the critical authentication flow and access tokens for Todoist and SimpleNote. Losing them would prevent further API access. All other uncited spans are exploratory or failed attempts.

\end{tcolorbox}

\subsection{Qualitative Analysis of Failure and Success Cases}
\label{app:qualitative_case_study}

To characterize when decision-preserving compression helps and when it can hurt, we conduct a counterfactual case study on AppWorld. For representative tasks we compare trajectories with and without compression and, for each span, inspect whether it was retained or discarded and what downstream effect this had. This analysis surfaces a failure mode and clarifies the complementary regime in which forward-looking compression is beneficial.

\paragraph{Failure mode: set-generalized plans undervalue per-entity spans.}
\methodname{} scores a span by how often a forward plan sketch depends on it. This estimator is reliable when future decisions reference specific prior observations, but it becomes miscalibrated when the remaining task is set-valued, i.e., the same operation must be applied to many independent entities. Consider the following task, which requires unioning two entity lists (phone roommates and existing friends) and acting on each member.

\begin{tcolorbox}[promptbox]
\small
\textbf{AppWorld Task \texttt{ff58e36} (Set-Valued Goal)}\\[2pt]
\textit{``Add all my friends and roommates as friends on Venmo, if they are not already.''}
\end{tcolorbox}

\noindent
In its plan sketch, the draft model expresses the remaining work as a single set-generalized step: ``for each contact, search and befriend on Venmo'' and therefore attributes forward utility to the \emph{procedure} while assigning low utility to the earlier spans that had enumerated the individual contacts. Those enumeration spans are consequently pruned. With one entity no longer present in the working context, the agent befriends four of the five required users and omits the fifth. The failure is thus directly attributable to a specific discarded span (the contact enumeration). We observe the same mechanism across sibling instances of this task template.

\paragraph{Condition for miscalibration.}
This failure arises specifically when a task requires carrying many independent, per-entity records across applications (e.g., befriend $N$ contacts, pay $N$ recipients, update $N$ rows) and the draft model produces a set-generalized plan (``for each item \ldots''), it under-weights the spans that hold the individual set members, and pruning them drops entities silently. As $N$ grows, the set-generalized plan increasingly under-represents individual members.

\paragraph{Where forward-looking compression helps.}
The complementary case is equally informative: the same pruning that is risky under set-valued goals is beneficial when long histories induce attention dilution. We analyze two such tasks.

\begin{tcolorbox}[promptbox]
\small
\textbf{AppWorld Task \texttt{b9c5c9a\_2} (Long-History Reconciliation)}\\[2pt]
\textit{``I have invited some of my friends to a reunion party via phone messages. I have made a CSV to track who is coming or not in \texttt{\textasciitilde/documents/personal\_stuff/} in my file system. Please update RSVPs in it as per their latest replies.''}
\end{tcolorbox}

\begin{tcolorbox}[promptbox]
\small
\textbf{AppWorld Task \texttt{9016950\_1} (Entity Deduplication)}\\[2pt]
\textit{``I need my parents to have a Venmo account. Last time I checked none had one. Make an account for whoever does not have it yet, using their email address and \texttt{A\}2Gm4r} as password. Then send them a phone text message: `I have created a venmo account for you. Please activate it, you should have received an email for it. I've set your password to be \texttt{A\}2Gm4r}. Change it soon too.'\,''}
\end{tcolorbox}

\noindent
In task \texttt{b9c5c9a\_2} (updating an RSVP spreadsheet from many phone-message threads), the uncompressed agent operates over a very large accumulated context ($\sim$20K-token peak) and produces an incomplete file, dropping several respondents; \methodname{}, by retaining only decision-relevant spans ($\sim$6K-token peak), reconstructs the correct file. Similarly, in \texttt{9016950\_1} the uncompressed agent loses track of which entities were already handled and over-includes recipients, whereas \methodname{} preserves the relevant state and acts on the correct entity. These cases illustrate that decision-preserving compression is not merely lossless bookkeeping: by removing distracting history that the base agent would otherwise mishandle, it can \emph{improve} the base agent's decisions on long-horizon tasks.

\paragraph{Takeaway.}
Compression is beneficial when it removes history that would otherwise dilute the agent's attention, and risky when the draft model undervalues a span's contribution to the future trajectory. Defensive verification (Sec.~\ref{method: defensive_verification}) is precisely designed to recover such spans, mitigating this effect by explicitly preserving evidence that a broad plan (in this case, a set-generalized plan) would otherwise discard.

\subsection{Potential Risks}
\label{app:potential_risks}

We identify limited direct societal risks from this work, as FOCUS is a compression utility layer rather than an autonomous decision-making system. However, two indirect risks merit acknowledgment. First, by enabling longer and cheaper agent execution, our method could lower the barrier to deploying under-supervised autonomous agents in sensitive domains (e.g., financial transactions, email management), where errors may have real-world consequences. Second, aggressive context compression could in principle remove safety-relevant spans (e.g., policy violation warnings or user-consent confirmations), potentially enabling an agent to bypass safeguards. We mitigate this via the defensive verification mechanism, which explicitly preserves failure and constraint signals; nevertheless, practitioners should validate retention behaviour in safety-critical deployments.

\subsection{Licenses.}
All benchmarks used in this work are publicly available under permissive licenses: AppWorld~\citep{trivedi2024appworld}, OfficeBench~\citep{wang2024officebench} and Natural Questions~\citep{kwiatkowski2019natural} are released under the Apache~2.0 license.

\end{document}